\newcolumntype{x}[1]{%
>{\centering\hspace{0pt}}p{#1}}%
\providecommand{\openone}{\leavevmode\hbox{\small1\kern-3.8pt\normalsize1}}
\def\ie{{\frenchspacing\it i.e.}}
\def\eg{{\frenchspacing\it e.g.}}
\def\etc{{\frenchspacing\it etc.}}
\def\spose#1{\hbox to 0pt{#1\hss}}
\def\simlt{\mathrel{\spose{\lower 3pt\hbox{$\mathchar"218$}}
   \raise 2.0pt\hbox{$\mathchar"13C$}}}
\def\simgt{\mathrel{\spose{\lower 3pt\hbox{$\mathchar"218$}}
     \raise 2.0pt\hbox{$\mathchar"13E$}}}
 \def\simpropto{\mathrel{\spose{\lower 3pt\hbox{$\mathchar"218$}}
     \raise 2.0pt\hbox{$\propto$}}}
\def\beq#1{\begin{equation}\label{#1}}
\def\eeq{\end{equation}}
\def\beqa#1{\begin{eqnarray}\label{#1}}
\def\eeqa{\end{eqnarray}}
\def\eq#1{equation~(\ref{#1})}	
\def\eqn#1{~(\ref{#1})}
\def\fig#1{Figure~\ref{#1}}
\def\Fig#1{Figure~\ref{#1}}
\def\Sec#1{Section~\ref{#1}}
\def\ed{\end{document}}
\def\a{{\bf a}}
\def\b{{\bf b}}
\def\dKL{d_{\rm{KL}}}
\def\Ell{{\mathcal L}}
\def\pbar{\bar{p}}
\def\u{{\bf u}}
\def\what{{\hat w}}
\def\x{{\bf x}}
\def\y{{\bf y}}
\def\z{{\bf z}}
\def\B{{\bf B}}
\def\F{{\bf F}}
\def\G{{\bf G}}
\def\P{{\bf P}}
\def\expec#1{\langle#1\rangle}
\def\rn{}
\def\nn#1 #2{#2. #1}				
\def\nnn#1 #2 #3{#2. #3. #1}			
\def\nnnn#1 #2 #3 #4{#2. #3. #4 #1}		
\def\nnnnn#1 #2 #3 #4 #5{#2. #3. #4 #5. #1}	
\def\rf#1;#2;#3;#4;#5 {{\frenchspacing\par\rn#1, #3 {\bf #4}, #5 (#2). \par}}
\def\rg#1;#2;#3;#4;#5;#6 {{\frenchspacing\par\rn#1, #3 {\bf #4}, #5 (#2). \par}}
\def\rfbook#1;#2;#3;#4;#5 {{\frenchspacing\par\rn#1, {\it #3} (#5, #4, #2).\par}}
\def\rfprep#1;#2;#3 {{\par\frenchspacing\rn#1, #3 (#2).\par}}
\def\rfproc#1;#2;#3;#4;#5;#6 {{\frenchspacing\par\rn#1 #2, in {\it #3}, ed. #4 (#5: #6)\par}}
\def\rfprocp#1;#2;#3;#4;#5;#6;#7 {{\frenchspacing\par\rn#1 #2, in {\it #3}, ed. #4 (#5: #6), p#7\par}}
\newtheorem{theorem}{Theorem}[section]
\newtheorem{corollary}{Corollary}[theorem]
\begin{document}
\pdfoptionalwaysusepdfpagebox=5


\title{Pareto-optimal data compression for binary classification tasks}

\author{Max Tegmark \& Tailin Wu}

\address{Dept.~of Physics, MIT Kavli Institute \& Center for Brains, Minds \& Machines, Massachusetts Institute of Technology, Cambridge, MA 02139; tegmark@mit.edu}

\date{ December 19, 2019; published in {\it Entropy}, {\bf 22}, 7}


\begin{abstract}
The goal of lossy data compression is to reduce the storage cost of a data set $X$ while retaining as much information as possible about something ($Y$) that you care about. For example, what aspects of an image $X$ contain the most information about whether it depicts a cat?
Mathematically, this corresponds to finding a deterministic mapping $X\to Z\equiv f(X)$ that 
maximizes the mutual information $I(Z,Y)$ while the entropy $H(Z)$ is kept below some fixed threshold. 
We present a new method for mapping out the Pareto frontier for classification tasks, reflecting the tradeoff between retained entropy and class information. We first show how a random variable $X$ (an image, say) drawn from a class $Y\in\{1,...,n\}$ can be distilled into a vector $W=f(X)\in \mathbb{R}^{n-1}$ losslessly, so that $I(W,Y)=I(X,Y)$; for example, for a binary classification task of cats and dogs, each image $X$ is mapped into a single real number $W$ retaining all information that helps distinguish cats from dogs. For the $n=2$ case of binary classification, we then show how $W$ can be further compressed into a discrete variable $Z=g_\beta(W)\in\{1,...,m_\beta\}$ by binning $W$ into $m_\beta$ bins, in such a way that varying the parameter $\beta$ sweeps out the full Pareto frontier, solving a generalization of the Discrete Information Bottleneck (DIB) problem.
We argue that the most interesting points on this frontier are ``corners" maximizing $I(Z,Y)$ for a fixed number of bins $m=2,3...$ which can be conveniently be found without multiobjective optimization. We apply this method to the CIFAR-10, MNIST and Fashion-MNIST datasets, illustrating how it can be interpreted as an information-theoretically optimal image clustering algorithm. We find that these Pareto frontiers are not concave, and that recently reported DIB phase transitions correspond to transitions between these corners, changing the number of clusters.
\end{abstract}

\maketitle


\section{Introduction}

A core challenge in science, and in life quite generally, is data distillation: 
keeping only a manageably small fraction of our available data $X$ while retaining as much information as possible about something ($Y$) that we care about. For example, what aspects of an image contain the most information about whether it depicts a cat ($Y=1$) rather than a dog ($Y=2$)?
Mathematically, this motivates finding a mapping $X\to Z\equiv g(X)$ that 
maximizes the mutual information $I(Z,Y)$ while the entropy $H(Z)$ is kept below some fixed threshold. 
The tradeoff between $H_*=H(Z)$ (bits stored) and $I_*=I(Z,Y)$ (useful bits) is described by a Pareto frontier, defined as  
\beq{ParetoDefEq}
I_*(H_*) \equiv\sup_{\{g: H[g(X)]\le H_*\}} I[g(X),Y],
\eeq
and illustrated in \fig{paretoAnalyticFig} (this is for a toy example described below; we compute the Pareto frontier for our cat/dog example in \Sec{ResultsSec}).
The shaded region is impossible because 
$I(Z,Y)\le I(X,Y)$ 
and  $I(Z,Y)\le H(Z)$.
The colored dots correspond to random likelihood binnings into various numbers of bins, as described in the next section, and the upper envelope of all attainable points define the Pareto frontier. Its ``corners'', which are marked by black dots and maximize $I(Z,Y)$ for $M$ bins ($M=1,2,...$), are seen to lie close to the vertical dashed lines 
$H(Z)=\log M$, corresponding to all bins having equal size. We plot the $H$-axis flipped to conform with the tradition that up and to the right are more desirable.
The core goal of this paper is to present a method for computing such Pareto frontiers.

\begin{figure}[h!]
\begin{center}
\vskip-5.4mm
\includegraphics[width=\columnwidth]{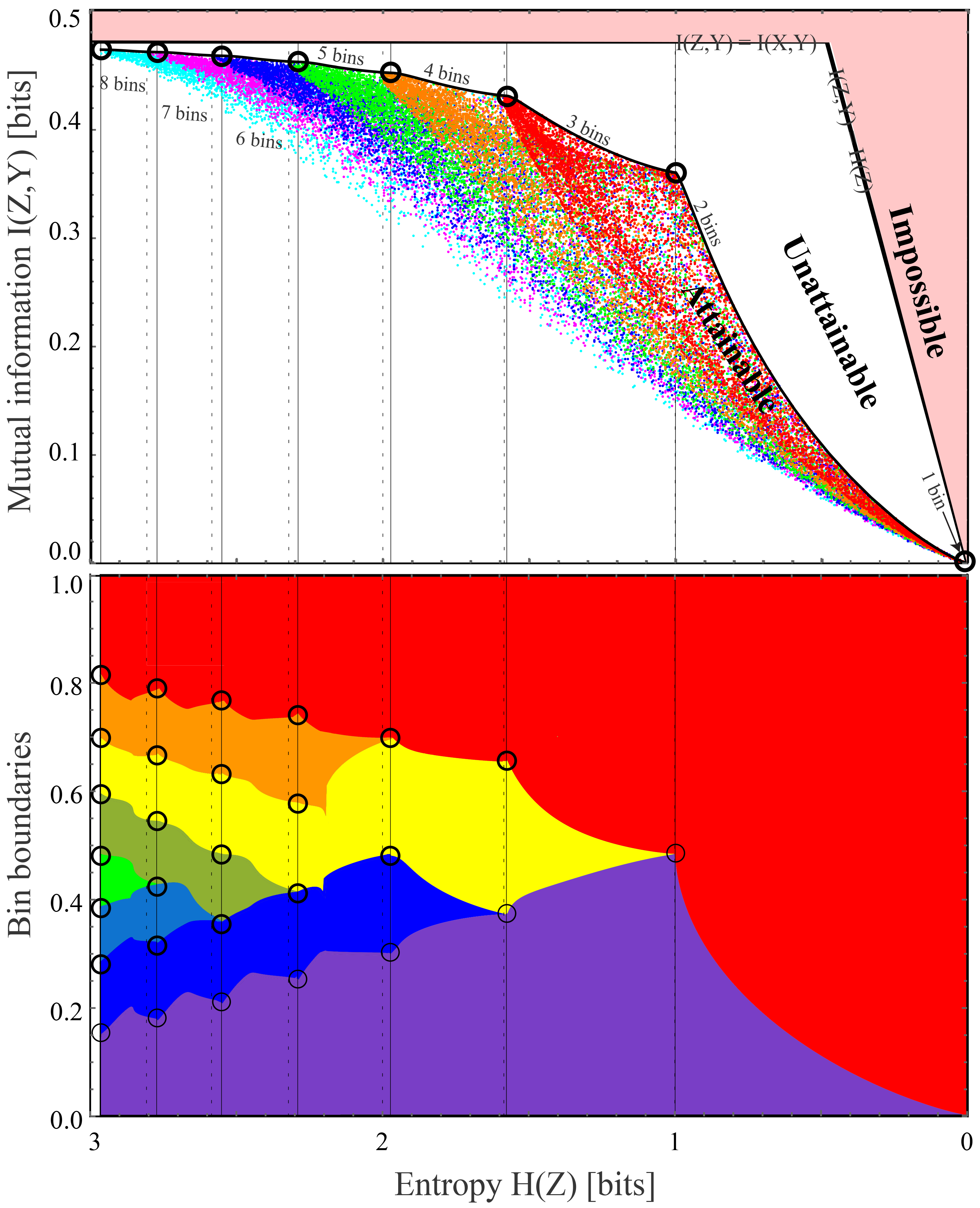}
\end{center}
\vskip-8mm
\caption{The Pareto frontier (top panel) for compressed versions $Z=g(X)$ of our warmup dataset $X)\in[0,1]^2$ with classes $Y\in\{1,2\}$, showing the maximum attainable class information $I(Z,Y)$ for a given entropy $H(Z)$, mapped using the method described in this paper using the likelihood binning in the bottom panel.
}
\label{paretoAnalyticFig}
\end{figure}

\subsection{Objectives \& relation to prior work}

In other words, the goal of this paper is to analyze soft rather than hard classifiers:
not to make the most accurate classifier, but rather to compute the Pareto frontier that reveals the 
most accurate (in an information-theoretic sense) classifier $Z$ given a constraint on its bit content $H(Z)$.
These optimal soft classifiers that we will derive (corresponding to points on the Pareto frontier) are useful
for the same reason that other methods for lossy data compression methods are useful:
overfitting less and therefore generalizing better, among other things.

This Pareto frontier challenge is thus part of the broader quest for data distillation:
lossy data compression that retains as much as possible of the information that is useful to us. 
Ideally, the information can be partitioned into a set of independent chunks and sorted from most to least useful, enabling us to select the number of chunks to retain so as to optimize our tradeoff between utility and data size.
Consider two random variables $X$ and $Y$ which may each be vectors or scalars.
For simplicity, consider them to be discrete with finite entropy\footnote{The discreteness restriction loses us no generality in practice, since  since we can always discretize real numbers by rounding them to some very large number of significant digits.}.
For prediction tasks, we might interpret $Y$ as the future state of a dynamical system that we wish to predict from the present state $X$. For classification tasks, we might interpret $Y$ as a class label that we wish to predict from an image, sound, video or text string $X$.  Let us now consider various forms of ideal data distillation, as summarized in Table~\ref{ComparisonTable}.

\begin{table}[h!]
\begin{tabular}{|c|l|l|l|}
\hline
Random		&What is				&\multicolumn{2}{c|}{Probability distribution}\\
\cline{3-4}
vectors		&distilled?			&Gaussian			&Non-Gaussian\\
\hline
1			&Entropy				&PCA				&Autoencoder\\
			&$H(X)=\sum_i H(Z_i)$	&$\z=\F\x$			&$Z=f(X)$\\
\hline
2			&Mutual information		&CCA				&Latent reps\\
			&$I(X,Y)=\sum_i I(Z_i,Z'_i)$	&$\z=\F\x$			&$Z=f(X)$\\
			&					&$\z'=\G\y$			&$Z'=g(Y)$\\
\hline
\end{tabular}
\caption{Data distillation: the relationship between Principal Component Analysis (PCA), Canonical Correlation Analysis (CCA), nonlinear autoencoders and nonlinear latent representations.
\label{ComparisonTable}
}
\end{table}

If we distill $X$ as a whole, then we would ideally like to find a function $f$ such that
the so-called latent representation $Z=f(X)$ retains the full entropy
$H(X)=H(Z)=\sum H(Z_i)$, decomposed into independent\footnote{When implementing any distillation algorithm in practice, there is always a one-parameter tradeoff between compression and information retention which defines a Pareto frontier. A key advantage of the latent variables (or variable pairs) being statistically independent is that this allows the Pareto frontier to be trivially computed, by simply sorting them by decreasing information content and varying the number retained.
}
parts with vanishing mutual infomation: 
$I(Z_i,Z_j)=\delta_{ij}H(Z_i).$
For the special case where $X=\x$ is a vector with a multivariate Gaussian distribution, 
the optimal solution is 
Principal Component Analysis (PCA) \cite{pearsonPCA1901}, which
has long been a workhorse of statistical physics and many other disciplines: here $f$ is simply a linear function mapping into the eigenbasis of the covariance matrix of $\x$.
The general case remains unsolved, and it is easy to see that it is hard: 
if $X=c(Z)$ where $c$ implements some state-of-the-art cryptographic code, then finding $f=c^{-1}$ (to recover the independent pieces of information and discard the useless parts) would generically require breaking the code. Great progress has nonetheless been made for many special cases, using techniques such as nonlinear autoencoders \cite{vincent2008extracting} and Generative Adversarial Networks (GANs) \cite{goodfellow2014generative}.

Now consider the case where we wish to distill $X$ and $Y$ separately,
into $Z\equiv f(X)$ and $Z'=g(Y)$, retaining the mutual information between the two parts. 
Then we ideally have
$I(X,Y)=\sum_i I(Z_i,Z'_i)$,
$I(Z_i,Z_j)=\delta_{ij}H(Z_i),$
$I(Z'_i,Z'_j)=\delta_{ij}H(Z'_i),$
$I(Z_i,Z'_j)=\delta_{ij}I(Z_i,Z'_j).$
This problem has attracted great interest, especially for time series where $X=\u_i$ and $Y=\u_j$ for some sequence of states $\u_k$ ($k=0,1, 2, ...$) in physics or other fields, 
where one typically maps the state vectors $\u_i$ into some lower-dimensional 
vectors $f(\u_i)$, after which the prediction is carried out in this latent space. 
For the special case where $X$ has a multivariate Gaussian distribution, 
the optimal solution is 
Canonical Correlation Analysis (CCA) \cite{hotellingCCA1936}: here both $f$ and $g$ are linear functions, computed via a  
singular-value decomposition (SVD) \cite{eckart1936SVD} of the cross-correlation matrix after prewhitening $X$ and $Y$.
The general case remains unsolved, and is obviously even harder than the above-mentioned 1-vector autoencoding problem. 
The recent work \cite{oord2018representation,clark2019unsupervised} review the state-of-the art as well as presenting  Contrastive Predictive Coding and Dynamic Component Analysis, powerful new distillation techniques for time series, following the long tradition of setting $f=g$ even though this is provably not optimal for the Gaussian case as shown in \cite{tegmark2019optimal}.

The goal of this paper is to make progress in the lower right quadrant of Table~\ref{ComparisonTable}. 
We will first show that if $Y\in \{1,2\}$ (as in binary classification tasks) and we can successfully train a classifier that correctly predicts the conditional probability distribution $p(Y|X)$, then it can be used to provide an exact solution to the distillation problem,
losslessly distilling $X$ into a single real variable $W=f(X)$. 
We will generalize this to an arbitrary classification problem $Y\in\{1,...,n\}$ by losslessly distilling $X$ into a vector $W=f(X)\in \mathbb{R}^{n-1}$, although in this case, the components of the vector $W$ may not be independent.
We will then return to the binary classification case and provide a family of binnings that map $W$ into an integer $Z$, allowing us to scan the full Pareto frontier reflecting the tradeoff between retained entropy and class information, illustrating the end-to-end procedure with the CIFAR-10, MNIST and Fashion-MNIST datasets. 
This is related to the work of \cite{kurkoski2014quantization} which maximizes $I(Z,Y)$ for a fixed number of bins (instead of for a fixed entropy), which corresponds to the ``corners'' seen in \fig{paretoAnalyticFig}.


This work is closely related to the Information Bottleneck (IB) method \cite{tishby2000information}, which provides an insightful, principled approach for balancing compression against prediction \cite{tan2019renormalization}. 
Just as in our work, the IB method aims to find a random variable $Z=f(X)$ that loosely speaking retains as much information as possible about $Y$ and as little other information as possible.
The IB method implements this by maximizing the IB-objective
\beq{IBeq}
{\cal L}_{\rm IB}= I(Z,Y)-\beta I(Z,X)
\eeq
where the Lagrange multiplier $\beta$ tunes the balance between knowing about $Y$ and forgetting about $X$. 
\cite{strouse2017deterministic} considered the alternative Deterministic Information Bottleneck (DIB) objective 
\beq{IBeq2}
{\cal L}_{\rm DIB}= I(Z,Y)-\beta H(Z),
\eeq
to close the loophole where $Z$ retains random information that is independent of both $X$ and $Y$ (which is possible if $f$ is function that contains random components rather than fully deterministic\footnote{If $Z=f(X)$ for some deterministic function $f$, 
which is typically not the case in the popular 
variational IB-implementation \cite{alemi2016deep,chalk2016relevant,fischer2018the},
then $H(Z|X)=0$, so $I(Z,X) \equiv H(Z)-H(Z|X) = H(Z)$, 
which means the two objectives\eqn{IBeq} 
and\eqn{IBeq2} are identical.}).
However, there is a well-known problem with this objective that occurs when $Z\in \mathbb{R}^n$ is continuous \cite{amjad2019learning}: 
$H(Z)$ is strictly speaking infinite, since it requires an infinite amount of information to store the infinitely many decimals of a generic real number. 
Although this infinity is normally regularized away by only defining $H(Z)$ up to an additive constant, which is irrelevant when minimizing \eqn{IBeq2}, the problem is that we can define a new rescaled random variable
\beq{RescalingEq}
Z'=aZ
\eeq
for a constant $a\ne 0$ and obtain\footnote{Throughout this paper, we take $\log$ to denote the logarithm in base $2$, so that entropy and mutual information are measured in bits.}
\beq{IscalingEq}
I(Z',X)=I(Z,X)
\eeq
and
\beq{HscalingEq}
H(Z')=H(Z)+n\log|a|.
\eeq
This means that by choosing $|a|\ll 1$, we can make $H(Z')$ arbitrarily negative while keeping $I(Z',X)$ unchanged, thus making ${\cal L}_{\rm DIB}$ arbitrarily negative.
The objective ${\cal L}_{\rm DIB}$ is therefore not bounded from below, and trying to minimize it will not produce an interesting result.
We will eliminate this $Z$-rescaling problem by making $Z$ discrete rather than continuous, so that $H(Z)$ is always well-defined and finite. 
Another challenge with the DIB objective of \eq{IBeq2}, which we will also overcome, is that it maximizes a linear combination of the two axes in \fig{paretoAnalyticFig}, and can therefore  only discover concave parts of the Pareto frontier, not convex ones (which are seen to dominate in \fig{paretoAnalyticFig}).

The rest of this paper is organized as follows:
In \Sec{MethodSecA}, we will provide an exact solution for the binary classification problem where $Y\in\{1,2\}$ by losslessly distilling $X$ into a single real variable $Z=f(X)$. We also generalize this to an arbitrary classification problem  $Y\in\{1,...,n\}$ by losslessly distilling $X$ into a vector $W=f(X)\in \mathbb{R}^{n-1}$, although the components of the vector $W$ may not be independent.
In \Sec{MethodSecB}, we return to the binary classification case and provide a family a binnings that map $Z$ into an integer, allowing us to scan the full Pareto frontier reflecting the tradeoff between retained entropy and class information. We apply our method to various image datasets in \Sec{ResultsSec} and discuss our conclusions in \Sec{ConclusionsSec}

\section{Method}
\label{LatentSec}



Our algorithm for mapping the Pareto frontier transforms our original data set $X$ in a series of steps which will be describe in turn below:
\beq{MappingsEq}
X\overset{w}{\mapsto} W\mapsto W_{\rm uniform}\mapsto W_{\rm binned}\mapsto W_{\rm sorted}\overset{B}{\mapsto} Z.
\eeq
As we will show, the first, second and fourth transformations retain all mutual information with the label $Y$, and the information loss about $Y$ can be kept arbitrarily small in the third step. In contrast, the last step treats the information loss as a tuneable parameter that parameterizes the Pareto frontier.

\subsection{Lossless distillation for classification tasks}
\label{MethodSecA}

Our first step is to compress $X$ (an image, say) into $W$, a set of $n-1$ real numbers, in such a way that no class information is lost about 
$Y\in\{1,...,n\}$. 

\begin{theorem}
\label{Wtheorem}
{\bf (Lossless Distillation Theorem):} For an arbitrary random variable $X$ and a categorical random variable $Y\in\{1,...,n\}$,
we have 
\beq{Theorem1Eq}
P(Y|X) = P(Y|W),
\eeq
where $W\equiv w(X)\in \mathbb{R}^{n-1}$
is defined by\footnote{Note that we ignore the $n^{\rm th}$ component since it is redundant:
$w_n(X)=1-\sum_i^{n-1} w_i(X)$.}
\beq{wDefEq}
w_i(X) \equiv P(Y=i | X).
\eeq
\end{theorem}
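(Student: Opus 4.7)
The claim is essentially a sufficient-statistic statement: the posterior $P(Y|X)$, viewed as a vector-valued function of $X$, is itself the defining map that produces $W$, so no information relevant to $Y$ is lost. My plan is to prove it by a short application of the tower property of conditional expectation, using the observation that each component $w_i(X)$ is, by construction, both $\sigma(X)$-measurable and $\sigma(W)$-measurable (it is just the $i$-th coordinate of $W$).

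The key steps, in order: (i) Fix $i\in\{1,\dots,n\}$ and write $P(Y=i\mid X) = E[\mathbf{1}_{Y=i}\mid X] = w_i(X)$ by the definition in \eq{wDefEq}. (ii) Since $W = w(X) = (w_1(X),\dots,w_{n-1}(X))$ is a deterministic function of $X$, the $\sigma$-algebra $\sigma(W)$ is a sub-$\sigma$-algebra of $\sigma(X)$, so the tower property gives
\beq{TowerStepEq}
P(Y=i\mid W) = E\bigl[E[\mathbf{1}_{Y=i}\mid X]\,\big|\,W\bigr] = E[w_i(X)\mid W].
\eeq
(iii) But $w_i(X)$ is, for $i<n$, literally the $i$-th coordinate of $W$, hence $\sigma(W)$-measurable, so $E[w_i(X)\mid W] = w_i(X)$; and for $i=n$ it equals $1-\sum_{j<n}w_j(X)$, which is also $\sigma(W)$-measurable. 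In either case $P(Y=i\mid W)=w_i(X)=P(Y=i\mid X)$, which is \eq{Theorem1Eq}.

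There is no real obstacle here beyond being careful about measurability when $X$ lives in an arbitrary (possibly continuous) space; the argument is the standard observation that the posterior $P(Y\mid X)$ is itself a sufficient statistic for $Y$. If one prefers an elementary route that avoids measure-theoretic conditional expectations, I would instead argue by grouping: partition the range of $X$ into level sets $\{X : w(X) = \bvec{w}\}$ for each attainable $\bvec{w} \in \mathbb{R}^{n-1}$; on each level set, $P(Y=i\mid X) = w_i$ is constant, and averaging the constant value against the conditional law of $X$ given $W=\bvec{w}$ yields $P(Y=i\mid W=\bvec{w}) = w_i$, giving the same conclusion. Either presentation fits in a few lines.
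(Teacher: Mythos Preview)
Your proof is correct. Your primary argument via the tower property is a genuinely different (and cleaner) route from the paper's: the paper proceeds by explicitly partitioning the domain of $X$ into level sets $s(W)=\{x:w(x)=W\}$, then rewrites $X$ as a pair $(W,X_W)$ where $X_W$ indexes the position within the level set, and concludes $P(Y|X)=P(Y|W,X_W)=P(Y|W)$ because $P(Y|X)$ is by construction constant on each level set. Your approach replaces this geometric bookkeeping with the single observation that $w_i(X)$ is $\sigma(W)$-measurable, so $E[w_i(X)\mid W]=w_i(X)$; this is shorter, more rigorous in the continuous case, and makes the sufficient-statistic nature of $W$ transparent. Amusingly, the ``elementary'' alternative you sketch at the end --- averaging the constant value of $w_i$ over the conditional law of $X$ on each level set --- is essentially the paper's own argument, minus the $X_W$ notation.
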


\begin{proof}
Let $S$ denote the domain of $X$, \ie, $X\in S$, and define the set-valued function 
$$s(W)\equiv \{x\in S: w(x)=W\}.$$
These sets $s(W)$ form a partition of $S$ parameterized by $W$, since they are disjoint and 
\beq{UnionEq}
\cup_{W\in\mathbb{R}^{n-1}} \>s(W) = S.
\eeq
For example, if $S=\mathbb{R}^2$ and $n=2$, then the sets $s(W)$ are simply contour curves of the 
conditional probability $W\equiv P(Y=1|X)\in \mathbb{R}$.
This partition enables us to uniquely specify $X$ as  the pair $\{W,X_W\}$ by first specifying which set $s[f(X)]$ it belongs to
(determined by $W=f(X)$), and then specifying the particular element within that set, which we denote
$X_W\in S(W)$.
This implies that 
\beq{Proof1Eq}
P(Y|X) = P(Y | W,X_W) = P(Y|W),
\eeq
completing the proof. 
The last equal sign follows from the fact that the conditional probability $P(Y|X)$ is independent of $X_W$, since it is by definition constant throughout the set $s(W)$.
\end{proof}

The following corollary implies that $W$ is an optimal distillation of the information $X$ has about $Y$, in the sense that it constitutes a lossless compression of said information:
$I(W,Y)=I(X,Y)$ as shown, and the total information content (entropy) in $W$ cannot exceed that of $X$ since it is a deterministic function thereof.
\begin{corollary}
With the same notation as above, we have
\beq{Theorem2Eq}
I(X,Y) = I(W,Y).
\eeq
\end{corollary}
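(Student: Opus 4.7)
The plan is to reduce the corollary to Theorem~\ref{Wtheorem} by expressing both mutual informations in the form $H(Y) - H(Y|\cdot)$ and showing the two conditional entropies agree. Since $W = w(X)$ is a deterministic function of $X$, one direction ($I(W,Y) \le I(X,Y)$) is immediate from the data processing inequality, but I think it is cleaner simply to prove equality directly from the pointwise identity $P(Y|X) = P(Y|W)$ established in the theorem.

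Concretely, I would first write $I(X,Y) = H(Y) - H(Y|X)$ and $I(W,Y) = H(Y) - H(Y|W)$, so it suffices to show $H(Y|X) = H(Y|W)$. Expanding the definition, $H(Y|X=x) = -\sum_y P(Y=y|X=x)\log P(Y=y|X=x)$, and by Theorem~\ref{Wtheorem} this equals $-\sum_y P(Y=y|W=w(x))\log P(Y=y|W=w(x)) = H(Y|W=w(x))$. Taking expectations over $X$ and using that $W$ is a deterministic function of $X$, the tower property (or law of total expectation) gives $H(Y|X) = E_X[H(Y|W=w(X))] = E_W[H(Y|W)] = H(Y|W)$, which yields the desired equality of mutual informations.

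There is no real obstacle here beyond carefully invoking the previous theorem and the law of total expectation; the work was done in establishing Theorem~\ref{Wtheorem}. The only subtlety worth flagging explicitly is that while $W$ is a deterministic function of $X$ (so the inequality $I(W,Y) \le I(X,Y)$ comes for free), the reverse inequality normally fails for arbitrary deterministic compressions, and here it holds precisely because $w$ was chosen to encode the full conditional distribution $P(Y|X)$, which is a sufficient statistic for $Y$ given $X$.
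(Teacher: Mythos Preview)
Your proposal is correct and follows essentially the same route as the paper: both arguments write $I(\cdot,Y)=H(Y)-H(Y|\cdot)$, invoke Theorem~\ref{Wtheorem} to replace $P(Y|X)$ by $P(Y|W)$ inside the conditional entropy, and then average out the residual dependence on $X$ (the paper does this via the decomposition $X=(W,X_W)$, you via the tower property, which amounts to the same thing). Your additional remark about sufficient statistics is a nice gloss but not needed for the proof.
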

\begin{proof}
For any two random variables, we have the identity $I(U,V)=H(V)-H(V|U)$, where 
 $I(U,V)$ is their mutual information and $H(V|U)$ denotes conditional entropy.
We thus obtain 
\beqa{Proof2Eq}
I(X,Y)&=&H(Y)-H(Y|X) = H(Y)+\expec{\log P(Y|X)}_{X,Y}\nonumber\\
	&=&H(Y)+\expec{\log P(Y|W)}_{W,X_W,Y}\nonumber\\
	&=&H(Y)+\expec{\log P(Y|W)}_{W,Y} \nonumber\\
	&=&H(Y)-H(Y|W) =I(W,Y),
\eeqa
which completes the proof.
We obtain the second line by using $P(Y|X) = P(Y|W)$ from Theorem~1 and specifying $X$ by $W$ and $X_W$, and the third line since $P(Y|W)$ is independent of $X_W$, as above.
\end{proof}

In most situations of practical interest, the conditional probability distribution $P(Y|X)$ is not precisely known, but can be approximated by training a neural-network-based classifier that outputs the probability distribution for $Y$ given any input $X$. We present such examples in \Sec{ResultsSec}. 
The better the classifier, the smaller the information loss $I(X,Y)-I(W,Y)$ will be, approaching zero in the limit of an optimal classifier.

\subsection{Pareto-optimal compression for binary classification tasks}
\label{MethodSecB}

 \begin{figure}[ht!]
\begin{center}
\includegraphics[width=\columnwidth]{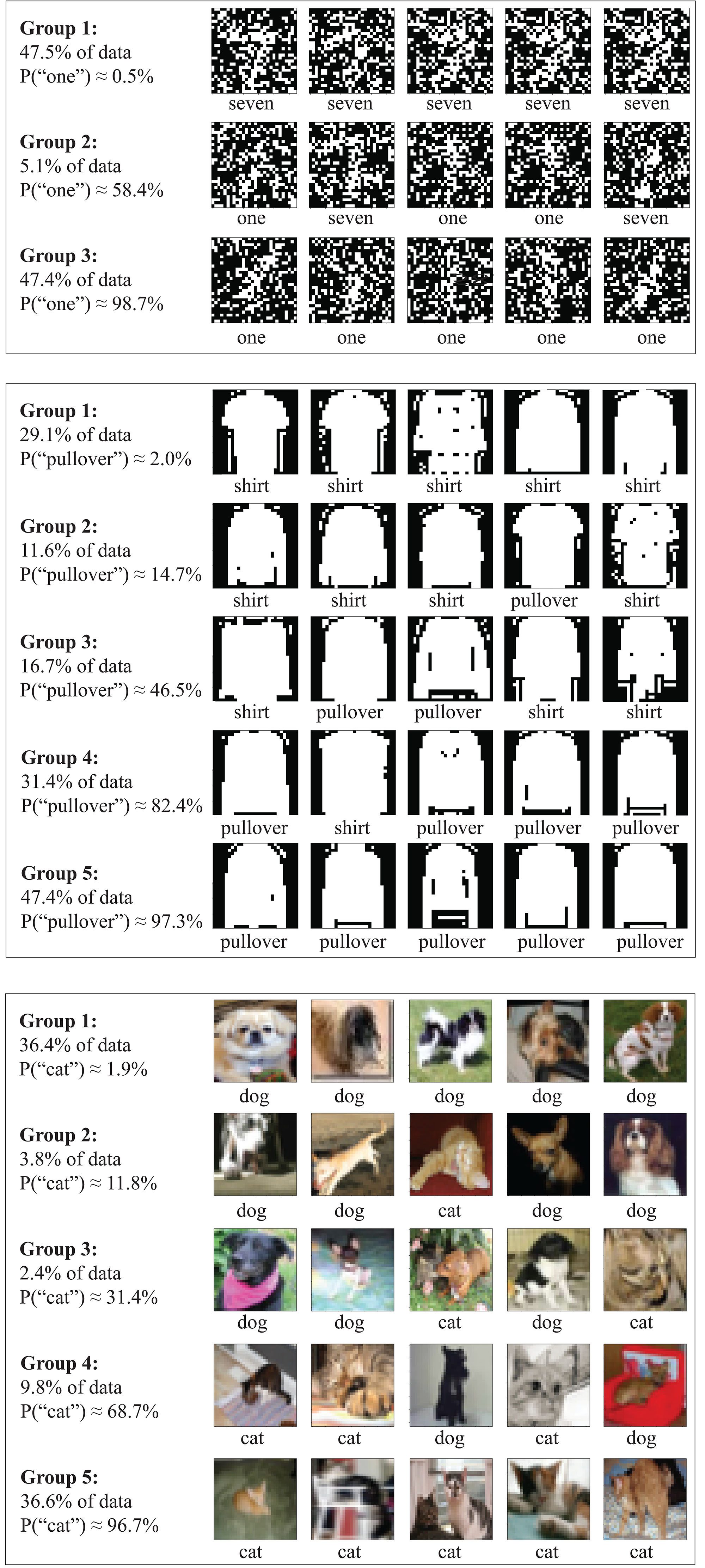}
\end{center}
\vskip-5mm
\caption{Sample data from \protect\Sec{ResultsSec}.
Images from MMNIST (top), Fashion-MNIST (middle) and CIFAR-10 are 
mapped into integers (group labels) $Z=f(X)$ retaining maximum mutual information with the class variable $Y$ (ones/sevens, shirts/pullovers and cats/dogs, respectively) for 
3, 5 and 5 groups, respectively. These mappings $f$ correspond to Pareto frontier ``corners". 
}
\label{classesFig}
\end{figure}

Let us now focus on the special case where $n=2$, \ie, binary classification tasks. 
For example, $X$ may correspond to images of equal numbers of felines and canines to be classified despite challenges with variable lighting, occlusion, {\etc} as in \fig{classesFig}, and $Y\in\{1,2\}$ may correspond to the labels ``cat" and ``dog".
In this case, $Y$ contains $H(Y)=1$ bit of information of which $I(X,Y)\le 1$ bit is contained in $X$.
Theorem~\ref{Wtheorem} shows that for this case, all of this 
information about whether an image contains a cat or a dog can be compressed into a single number $W$ which is not a bit like $Y$, but a real number between zero and one. 

The goal of this section is find a class of functions $g$ that perform Pareto-optimal lossy compression of $W$, mapping it into an 
integer $Z\equiv g(W)$ that maximizes $I(Z,Y)$ for a fixed entropy $H(Z)$.\footnote{Throughout this paper, we will use the term ``Pareto-optimal" or ``optimal" in this sense, \ie, maximizing $I(X,Y)$ for a fixed $H(Z)$.}
The only input we need for our work in this section is the joint probability distribution $f_i(w)=P(Y\hbox{=}i,W\hbox{=}w)$, whose marginal distributions are the discrete probability distribution for $P^Y_i$ for
$Y$ and the probability distribution $f$ for $W$, which we will henceforth assume to be continuous:
\beqa{WYmargEq1}
f(w)&\equiv&\sum_{i=1}^2 f_i(w),\\
P^Y_i\equiv P(Y\hbox{=}i)&=&\int_0^1 f_i(w)dw.
\eeqa

\subsubsection{Uniformization of $W$}

For convenience and without loss of generality, we will henceforth assume that $f(w)=1$, \ie, that $W$ has a uniform distribution on the unit interval $[0,1]$. We can do this because if $W$ were not uniformly distributed, we could make it so by using the standard statistical technique of applying its cumulative probability distribution function to it:
\beq{UniformizationEq}
W \mapsto W' \equiv F(W),\quad F(w)\equiv\int_0^w f(w')dw',
\eeq
retaining all information ---  $I(W',Y)=I(W,Y)$ ---
since this procedure is invertible almost everywhere.

\begin{figure}[ht]
\begin{center}
\includegraphics[width=\columnwidth]{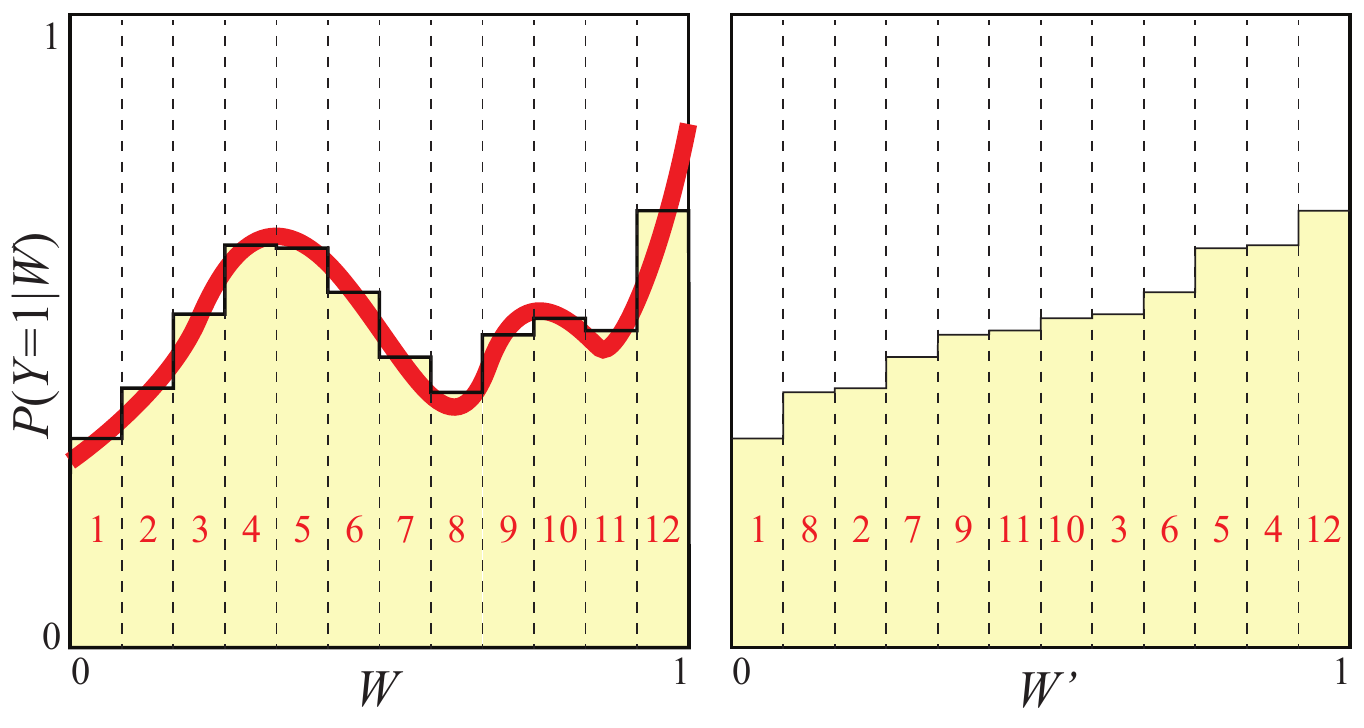}
\end{center}
\caption{Essentially all information about $Y$ is retained if $W$ is binned into sufficiently narrow bins.
Sorting the bins (left) to make the conditional probability monotonically increasing (right) changes neither this information not the entropy.
}
\label{fineBinningFig}
\end{figure}

\subsubsection{Binning $W$}
Given a set of bin boundaries $b_1<b_2<...<b_{n-1}$ grouped into a vector $\b$,
we define the integer-value {\it contiguous binning function} 
\beq{BinningFuncDef}
B(x,\b)\equiv
\left\{
\begin{tabular}{l}
$1$\quad if $x<b_1$\\
$k$\quad if $b_{k-1}<x\le b_k$\\
$n$\quad if $x\ge b_{N-1}$\\
\end{tabular}
\right.
\eeq
$B(x,\b)$ can thus be interpreted as the ID of the bin into which $x$ falls. 
Note that $B$ is a monotonically increasing piecewise constant function of $x$ that is shaped like an $N$-level staircase with $n-1$ steps at $b_1,...,b_{N-1}$. 

Let us now bin $W$ into $N$ equispaced bins, by
mapping it into an integer $W'\in\{1,...,N\}$ (the bin ID) defined by 
\beq{WprimeDefEq}
W'\equiv W_{\rm binned}\equiv B(W,\b_N),
\eeq
where $\b$ is the vector with elements $b_j=j/N$, $j=1,...,N-1$.
As illustrated visually in \fig{fineBinningFig} and mathematically in Appendix~\ref{LosslessBinningAppendix}, binning $W\mapsto W'$ corresponds
to creating a new random variable for which the conditional distribution
$p_1(w)=P(Y\hbox{=}1|W\hbox{=}w)$ is replaced by a piecewise constant function $\pbar_1(w)$, 
replacing the values in each bin by their average.
The binned variable $W'$ thus retains only information about which bin $W$ falls into, discarding all information about the precise location within that bin.
In the $N\to\infty$ limit of infinitesimal bins,  
$\pbar_1(w)\to p_1(w)$, and we expect the above-mentioned discarded information to become negligible.
This intuition is formalized by 
\ref{LosslessBinningTheorem} in Appendix~\ref{LosslessBinningAppendix}, which under mild smoothness assumptions ensuring that $p_1(w)$ is not pathological 
shows that  
\beq{binningLimitEq}
I(W',Y) \to I(W,Y)\quad\hbox{as}\quad N\to\infty,
\eeq
\ie, that we can make the binned data $W'$ retain essentially all the class information from 
$W$ as long as we use enough bins. 

In practice, such as for the numerical experiments that we will present in Section~\ref{ResultsSec}, training data is never infinite and the conditional probability function $p_1(w)$ is never known to perfect accuracy.
This means that the pedantic distinction between 
$I(W',Y)=I(W,Y)$ and $I(W',Y)\approx I(W,Y)$ for very large $N$ is completely irrelevant in practice. In the rest of this paper, we will therefore work with the
unbinned ($W$) and binned ($W'$) data somewhat interchangeably below for convenience, occasionally dropping the apostrophy $'$ from $W'$ when no confusion is caused. 

\subsubsection{Making the conditional probability monotonic}

For convenience and without loss of generality, we can assume that the conditional probability distribution $\pbar_1(w)$ is a monotonically increasing function.
We can do this because if this were not the case, we could make it so by sorting the bins by increasing conditional probability, as illustrated in 
\fig{fineBinningFig}, because both the entropy $H(W')$ and the mutual information $I(W',Y)$ are left invariant by this renumbering/relabeling of the bins. The ``cat" probability $P(Y\hbox{=}1)$ (the total shaded area in \fig{fineBinningFig}) is of course also left unchanged by both this sorting and by the above-mentioned binning.

\subsubsection{Proof that Pareto frontier is spanned by contiguous binnings}

We are now finally ready to tackle the core goal of this paper: mapping the Pareto frontier $(H_*,I_*)$ of optimal data compression $X\mapsto Z$ that reflects the tradeoff between $H(Z)$ and $I(Z,Y)$.
While fine-grained binning has no effect on the entropy $H(Y)$ and negligible effect on $I(W,Y)$, it dramatically reduces the entropy of our data.
Whereas $H(W)=\infty$ since $W$ is continuous\footnote{While this infinity, which reflects the infinite number of bits required to describe a single generic real number, is customarily eliminated by defining  entropy only up to an overall additive constant, we will not follow that custom here, for the reason explained in the introduction.}, $H(W')=\log N$ is finite, approaching infinity only in the limit of infinitely many infinitesimal bins.
Taken together, these scalings of $I$ and $H$ imply that the leftmost part of the Pareto frontier $I_*(H_*)$, defined by \eq{ParetoDefEq} and illustrated in \fig{paretoAnalyticFig}, asymptotes  to a horizontal line of height $I_*=I(X,Y)$ as $H_*\to\infty$.

To reach the interesting parts of the Pareto frontier further to the right, we must destroy some information
about $Y$. We do this by defining
\beq{ZdefEq}
Z = g(W'),
\eeq
where the function $g$ groups the tiny bins indexed by $W' \in\{1,...,N\}$ into fewer ones indexed by
$Z\in \{1,...,M\}$, $M<N$. 
There are vast numbers of such possible groupings, since each group corresponds to one of the
$2^N-2$ nontrivial subsets of the tiny bins. Fortunately, as we will now prove, we need only consider 
the $\mathcal{O}(N^M)$ {\it contiguous} groupings, since non-contiguous ones are inferior and cannot lie on the Pareto frontier. Indeed, we will see that for the examples in \Sec{ResultsSec}, $M\simlt 5$ 
suffices to capture the most interesting information. 

\begin{figure}[ht]
\begin{center}
\includegraphics[width=\columnwidth]{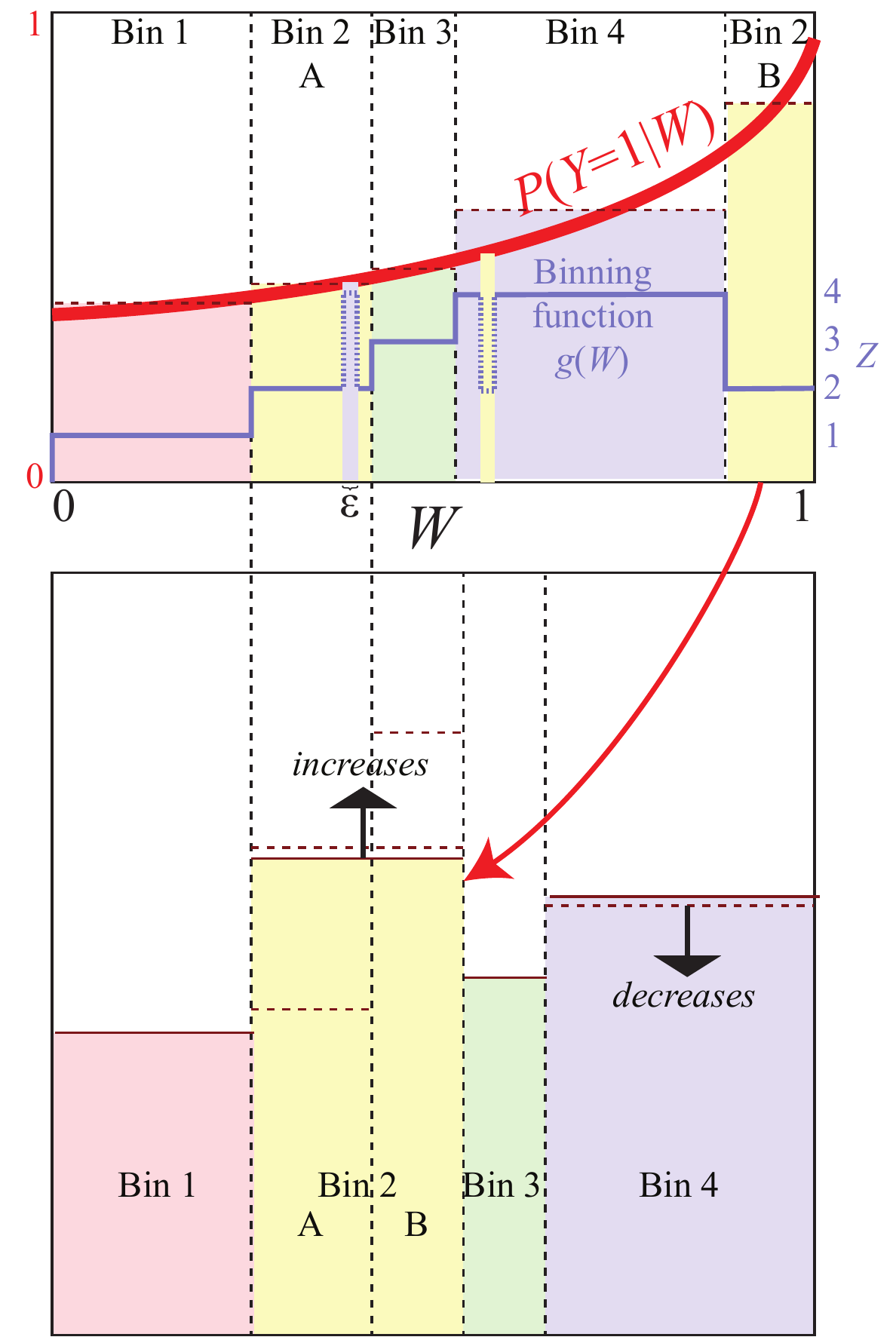}
\end{center}
\caption{The reason that the Pareto frontier can never be reached using non-contiguous bins is that a swapping 
parts of them against parts of an intermediate bin can increase $I(Z,X)$ while keeping $H(Z)$ constant. 
In this example, the binning function $g$ assigns two separate $W$-intervals (top panel) to the same bin (bin 2) as seen is the bottom panel. The shaded rectangles have widths $P_i$, heights $p_i$ and areas
$P_{i1}=P_i p_1$. In the upper panel, the conditional probabilities $p_i$ are monotonically increasing because they are averages of the monotonically increasing curve $p_1(w)$.
}
\label{binningFig}
\end{figure}

\begin{theorem}
\label{ContiguousBinningTheorem}
{\bf (Contiguous Binning Theorem):} If $W$ has a uniform distribution and the conditional probability distribution $P(W|Y\hbox{=}1)$ is monotonically increasing,
then all points $(H_*,I_*)$ on the Pareto frontier correspond to binning $W$ into contiguous intervals, \ie, if
\beq{IstarDefEq}
I(H_*) \equiv\sup_{\{g: H[g(W)]\le H_*\}} I[g(W),Y],
\eeq
then there exists a set of bin boundaries $b_1<...<b_{n-1}$
such that the binned variable $Z\equiv B(W,\b)\in\{1,...,M\}$ satisfies $H(Z)=H_*$ and $I(Z,Y)=I_*$.
\end{theorem}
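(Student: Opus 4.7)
The plan is to prove Theorem~\ref{ContiguousBinningTheorem} by contradiction: I assume that some binning $g:[0,1]\to\{1,\ldots,M\}$ attains a Pareto-frontier point yet assigns non-contiguous $W$-intervals to some bin, and I construct a size-preserving perturbation $g'$ with $H[g'(W)]=H[g(W)]$ and $I[g'(W),Y] > I[g(W),Y]$, delivering the contradiction. The opening reduction is to separate the ``size'' and ``position'' degrees of freedom: under the uniform assumption the bin masses $P_k \equiv |g^{-1}(k)|$ determine $H(Z)$ entirely via $H(Z) = -\sum_k P_k \log P_k$, while
\[
I(Z,Y) = H(Y) - \sum_k P_k\, h(p_k), \qquad p_k \equiv \frac{1}{P_k}\int_{g^{-1}(k)} p_1(w)\,dw,
\]
with $h$ the binary entropy and $p_1(w)=P(Y{=}1\,|\,W{=}w)$. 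The task therefore reduces to showing that, at fixed bin masses, mass can be moved between bins so as to strictly decrease $\sum_k P_k h(p_k)$.

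The key mechanism is a ``swap''. Pick two bins $i,j$, a subset $A\subset g^{-1}(i)$ and $B\subset g^{-1}(j)$ with $|A|=|B|=\epsilon$, and define $g'$ by reassigning $A$ to bin $j$ and $B$ to bin $i$; the bin masses, and hence $H(Z)$, are preserved exactly. Writing $\alpha \equiv \epsilon^{-1}\int_A p_1$ and $\beta \equiv \epsilon^{-1}\int_B p_1$, the conditional probabilities update as $p_i \to p_i + \epsilon(\beta-\alpha)/P_i$ and $p_j \to p_j + \epsilon(\alpha-\beta)/P_j$, so to first order in $\epsilon$
\[
\Delta\Bigl[P_i h(p_i) + P_j h(p_j)\Bigr] = \epsilon(\beta-\alpha)\bigl[h'(p_i) - h'(p_j)\bigr].
\]
Since $h'(p) = \log\tfrac{1-p}{p}$ is strictly decreasing on $(0,1)$, this is strictly negative exactly when the higher-$p_1$-average piece is moved into the bin with the larger $p_k$.

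It remains to show that non-contiguity always enables such a beneficial swap. Non-contiguity furnishes $w_1 < w_2 < w_3$ with $g(w_1) = g(w_3) = k$ and $g(w_2) = j \neq k$; take $B$ to be a short interval around $w_2$, so $\beta \approx p_1(w_2)$. If $p_j > p_k$, choose $A$ around $w_1$ so that $\alpha \approx p_1(w_1) < \beta$; if $p_j < p_k$, choose $A$ around $w_3$ so that $\alpha \approx p_1(w_3) > \beta$. In both cases strict monotonicity of $p_1(w)$ makes $(\beta-\alpha)(p_j-p_k) > 0$, so $I(Z,Y)$ strictly increases, contradicting Pareto-optimality. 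The main obstacle I anticipate is the degenerate case $p_j = p_k$, in which the first-order term vanishes; I expect to close it by Taylor-expanding one order further, where strict concavity of $h$ yields an $\mathcal{O}(\epsilon^2)$ change of sign $\tfrac{1}{2}h''(p_k)(\beta-\alpha)^2\bigl[P_i^{-1}+P_j^{-1}\bigr] < 0$, still a strict improvement. A small auxiliary step then extends this pointwise strict-improvement statement to rule out non-contiguous binnings from the supremum in \eq{IstarDefEq}, and a compactness argument in the $(M-1)$-dimensional space of boundary vectors $\b$ produces a contiguous binning attaining $(H_*,I_*)$.
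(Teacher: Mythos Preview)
Your approach is essentially the paper's own: a measure-preserving swap between a non-contiguous bin $k$ and an intervening bin $j$, with the swap direction chosen via monotonicity of $p_1(w)$ so that the conditional probabilities $p_k,p_j$ move further apart, which strictly increases $I(Z,Y)$ at fixed $H(Z)$. The paper isolates this last implication as a separate lemma (Theorem~\ref{informationTheorem}); you compute it directly via $h'$.

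However, your case analysis has the two cases reversed. You correctly derive $\Delta\bigl[\sum_m P_m h(p_m)\bigr]=\epsilon(\beta-\alpha)\bigl[h'(p_k)-h'(p_j)\bigr]$ and correctly state the principle that this is negative when the higher-$p_1$ piece moves to the higher-$p$ bin, but your concrete choices violate it. In the case $p_j>p_k$ with $A$ near $w_1$ you obtain $\alpha<\beta$; then $p_k\mapsto p_k+\epsilon(\beta-\alpha)/P_k$ \emph{increases} while $p_j$ \emph{decreases}, so the conditionals move closer and $I(Z,Y)$ goes down, not up. Equivalently, since $h'$ is strictly decreasing, $h'(p_k)-h'(p_j)$ has the same sign as $p_j-p_k$, so your stated condition $(\beta-\alpha)(p_j-p_k)>0$ makes the first-order change positive, not negative. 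The fix is simply to swap the two cases: for $p_j>p_k$ take $A$ near $w_3$, and for $p_j<p_k$ take $A$ near $w_1$---this is exactly the paper's choice of $\mu_k^{\rm right}$ versus $\mu_k^{\rm left}$. (Your second-order handling of the degenerate case $p_j=p_k$ is tidier than the paper's iterated-elimination argument, but it needs $\alpha\ne\beta$, which requires \emph{strict} monotonicity of $p_1$; you should make that hypothesis explicit.)
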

\begin{proof}
We prove this by contradiction:
we will assume that there is a point 
$(H_*,I_*)$
on the Pareto frontier to which we can come arbitrarily close with
$\left(H(Z),I(Z,Y)\right)$ for $Z\equiv g(X)$ for a 
compression function $g: \mathbb{R}\mapsto \{1,...,M\}$
that is not a contiguous binning function, and obtain a contradiction by using $g$ to construct another
compression function $g'(W)$ lying above the Pareto frontier, 
with $H[g'(W)]=H_*$ and $I[g'(W),Y])>I_*$.
The joint probability distribution $P_{ij}$ for the $Z$ and $Y$ is given by the Lebesgue integral
\beq{LebesgueEq}
P_{ij}\equiv P(Z\hbox{=}i,Y\hbox{=}j) = \int f_j d\mu_i,
\eeq
where $f_j(w)$ is the  joint probability distribution for $W$ and $Y$ introduced earlier and 
$\mu_j$ is the set
$\mu\equiv\{w\in [0,1]: g(w)=i\}$, \ie, the set of $w$-values that are grouped together into the
$i^{\rm th}$ large bin.
We define the marginal and conditional probabilities
\beq{MargCondProbDef}
P_i\equiv P(Z\hbox{=}i)=P_{i1}+P_{i2}, \quad	p_i\equiv P(Y\hbox{=}1|Z\hbox{=}i)={P_{i1}\over P_i}.
\eeq
\Fig{binningFig} illustrates the case where the binning function $g$ corresponds to $M=4$ large bins, the second of which consists of two non-contiguous regions that are grouped together; the shaded rectangles in the bottom panel have width $P_i$, height $p_i$ and area $P_{ij}=P_i p_i$.


According to Theorem~\ref{informationTheorem} in the Appendix, we obtain the contradiction required to complete our proof
(an alternative compression $Z'\equiv g'(W)$ above the Pareto frontier with $H(Z')=H_*$
and $I(Z',Y)>I_*$) if there are two different conditional probabilities $p_k\ne p_l$, 
and we can change $g$ into $g'$ so that the joint distribution $P'_{ij}$ of $Z'$ and $Y$ changes in the following way:
\begin{enumerate}
\item Only $P_{kj}$ and $P_{lj}$ change,
\item both marginal distributions remain the same,
\item the new conditional probabilities $p'_k$ and $p'_l$ are further apart.
\end{enumerate}
\Fig{binningFig} shows how this can be accomplished for non-contiguous binning:
let $k$ be a bin with non-contiguous support set $\mu_k$ (bin 2 in the illustrated example), let $l$ be a bin whose support $\mu_l$ (bin 4 in the example) contains a positive measure subset 
$\mu_l^{\rm mid}\subset\mu_l$ within two parts $\mu_k^{\rm left}$ and $\mu_k^{\rm right}$ of $\mu_k$, 
and define a new binning function $g'(w)$ that differs from $g(w)$ only by swapping a set $\mu^\epsilon\subset \mu_l^{\rm mid}$ against a subset of either $\mu_k^{\rm left}$ or $\mu_k^{\rm right}$ of measure $\epsilon$ (in the illustrated example, the binning function change implementing this subset is shown with dotted lines).
This swap leaves the total measure of both bins (and hence the marginal distribution $P_i$) unchanged, and also leaves $P(Y\hbox{=}1)$ unchanged. 
If $p_k<p_l$, we perform this swap between $\mu_l^{\rm mid}$ an $\mu_k^{\rm right}$ (as in the figure), 
and if $p_k>p_l$, we instead perform this swap between $\mu_l^{\rm mid}$ an $\mu_k^{\rm left}$, in both cases guaranteeing that $p_l$ and $p_k$ move further apart (since $p(w)$ is monotonically increasing).
This completes our proof by contradiction except for the case where $p_k=p_l$; in this case, we swap to entirely eliminate the discontiguity, and repeat our swapping procedure between other bins until we increase the entropy (again obtaining a contradiction) or end up with a fully contiguous binning (if needed, $g(w)'$ can be changed to eliminate any measure-zero subsets that ruin contiguity, since they leave the Lebesgue integral in \eq{LebesgueEq} unchanged.)
\end{proof}

\subsection{Mapping the frontier}

Theorem~\ref{ContiguousBinningTheorem} implies that we can in practice find the Pareto frontier for any random variable $X$ by searching the space of contiguous binnings of $W=w(X)$ after uniformization, binning and sorting. 
In practice, we can first try the 2-bin case by scanning the bin boundary $0<b_1<1$, 
then trying the 3-bin case by trying bin boundaries $0<b_1<b_2<1$, then trying the 4-bin case, {\etc}, as illustrated in \fig{paretoAnalyticFig}.
Each of these cases corresponds to a standard multi-objective optimization problem aiming to maximize the two objectives $I(Z,Y)$ and $H(Z)$. We perform this optimization numerically with the AWS algorithm of \cite{kim2005adaptive} as described in the next section. 

Although the uniformization, binning and sorting procedures are helpful in practice as well as for for simplifying proofs, they are not necessary in practice.
Since what we really care about is grouping into integrals containing similar conditional probabilities $p_1(w)$, not similar $w$-values,
it is easy to see that binning horizontally after sorting is equivalent to binning vertically before sorting. 
In other words, we can eliminate the binning and sorting steps if we replace ``horizontal" binning
$g(W) = B(W,\b)$
by ``vertical" binning 
\beq{verticalBinningEq}
g(W) = B[p_1(W),\b],
\eeq
where $p_1$ denotes the conditional probability as before.


\section{Results}
\label{ResultsSec}

The purpose of this section is to examine how our method for Pareto-frontier mapping works in practice on various datasets, both to compare its performance with prior work and to gain insight into the shape and structure of the Pareto frontiers for well-known datasets such as the CIFAR-10 image database \cite{krizhevsky2014cifar}, the MNIST database of hand-written digits \cite{lecun2010mnist} and the Fashion-MNIST database of garment images \cite{xiao2017fashion}.
Before doing this, however, let us build intuition for how our method works by testing on a much simpler toy model that is analytically solvable, where the accuracy of all approximations can be exactly determined.

\subsection{Analytic warmup example}

Let the random variables $X=(x_1,x_2)\in[0,1]^2$ and 
$Y\in\{1,2\}$ be defined by the bivariate probability distribution
\beq{xProbDistEq}
f(X,Y) =  
\left\{
\begin{tabular}{ll}
$2 x_1 x_2$				&if $Y=1$,\\
$2(1-x_1)(1-x_2)$			&if $Y=2$,
\end{tabular}
\right.
\eeq
which corresponds to $x_1$ and $x_2$ being two independent and identically distributed random variables with triangle distribution $f(x_i)=x_i$ if $Y=1$, but flipped $x_i\mapsto 1-x_i$ if $Y=2$.
This gives $H(Y)=1$ bit and mutual information
\beq{AnalyticIeq}
I(X,Y)=1-{\pi^2-4\over 16\ln 2}\approx 0.4707\>\hbox{bits}.
\eeq
The compressed random variable $W=w(X)\in\mathbb{R}$ defined by \eq{wDefEq} is thus
\beq{analyticgEq}
W=
P(Y\hbox{=}1|X) 
= {x_1 x_2\over x_1 x_2+(1-x_1)(1-x_2)}.
\eeq
After defining $Z\equiv B(W,\b)$ for a vector $\b$ of bin boundaries, 
a straightforward calculation shows that the joint probability distribution of $Y$ and the binned variable $Z$ is given by
\beq{AnalyticPeq}
P_{ij}\equiv P(Z\hbox{=}i,Y\hbox{=}j) = 
F_j(b_{i+1})-F_j(b_i),
\eeq
where the cumulative distribution function
$F_j(w)\equiv P(W\hbox{$<$}w,Y\hbox{=}j)$
is given by
\beqa{analyticFwEq}
F_1(w)&=&{w^2\left[(2 w - 1) (5 - 4 w) + 
     2 (1 - w^2) \log(w^{-1} - 1)\right]\over 2 (2 w-1)^4},\nonumber\\
     F_2(w)&=&{1\over 2} - F_1(1-w).
\eeqa   
Computing $I(W,Y)$ using this probability distribution recovers exactly the same mutual information $I\approx 0.4707\>$bits as in \eq{AnalyticIeq}, as we proved in Theorem~\ref{Wtheorem}.

\subsection{The Pareto frontier}

Given any binning vector $\b$, we can plot a corresponding point 
$(H[Z],I[Z,Y])$ in \fig{paretoAnalyticFig} by computing 
$I(Z,Y)=H(Z)+H(Y)-H(Z,Y)$,\\
$H(Z,Y)=-\sum P_{ij}\log P_{ij}$, {\etc},
where $P_{ij}$ is given by \eq{AnalyticPeq}.

The figure shows 6,000 random binnings each for $M=3,...,8$ bins; as we have proven, the upper envelope of points corresponding to all possible (contiguos) binnings defines the Pareto frontier.
The Pareto frontier begins with the black dot at $(0,0)$ (the lower right corner), since
$M=1$ bin obviously destroys all information. The $M=2$ bin case corresponds to a 1-dimensional closed curve parametrized by the single parameter $b_1$ that specifies the boundary between the two bins: 
it runs from $(0,0)$ when $b_1=1$, moves to the left until $H(Z)=1$ when $b_1=0.5$, and returns to $(0,0)$ when $b_1=1$.  The $b_1<0.5$ and $b_1>0.5$ branches are indistinguishable in \fig{paretoAnalyticFig} because of the symmetry of our warmup problem, but in generic cases, a closed loop can be seen where only the upper part defines the Pareto frontier.

More generally, we see that the set of all binnings into $M>2$ bins maps the vector $\b$ of $M-1$ bin boundaries into a contiguous region in \fig{paretoAnalyticFig}. The inferior white region region below  can also be reached if we use non-contiguous binnings.

The Pareto Frontier is seen to resemble the  top of a circus tent, with convex segments separated by ``corners" where the derivative vanishes, corresponding to a change in the number of bins. We can understand the origin of these corners by considering what happens when adding a new bin of infinitesimal size $\epsilon$. 
As long as $p_i(w)$ is continuous, this changes all probabilites $P_{ij}$ by amounts 
$\delta P_{ij}=\mathcal{O}(\epsilon)$, and the probabilities corresponding to the new bin
(which used to vanish) will now be ${O}(\epsilon)$.
The function $\epsilon\log\epsilon$ has infinite derivative at $\epsilon=0$, blowing up as 
$\mathcal{O}(\log\epsilon)$, which implies that 
the entropy increase $\delta H(Z)=\mathcal{O}(-\log\epsilon)$.
In contrast, a straightforward calculation shows that all $\log\epsilon$-terms cancel when computing the mutual information, which changes only by 
$\delta I(Z,Y)=\mathcal{O}(\epsilon)$. As we birth a new bin and move leftward from one of the black dots in \fig{paretoAnalyticFig}, the initial slope of the Pareto frontier is thus
\beq{ParetoSlopeEq}
\lim_{\epsilon\to 0}\>{\delta I(Z,Y)\over \delta H(Z)}=0.
\eeq
In other words, the Pareto frontier starts out {\it horizontally} to the left of each of its corners in \fig{paretoAnalyticFig}. Indeed, the corners are ``soft" in the sense that the derivative of the Pareto Frontier is continuous and vanishes at the corners: for a given number of bins, $I(X,Z)$ by definition takes its global maximum at the corresponding corner, so the derivative $\partial I(Z,Y)/\partial H(Z)$ vanishes also as we approach the corner from the right.\footnote{The first corner (the transition from 2 to 3 bins) can nonetheless look fairly sharp because the 2-bin curve turns around rather abruptly, and the right derivative does not vanish in the limit where a symmetry causes the upper and lower parts of the 2-bin loop to coincide.}


Our theorems imply that in the $M\to\infty$ limit of infinitely many bins, 
successive corners become gradually less pronounced (with ever smaller derivative discontinuities), because the left asymptote of the Pareto frontier simply approaches the horizontal line $I_*=I(Y,Z)$.

\subsubsection{Approximating $w(X)$}

For our toy example, we knew the conditional probability distribution $P(Y|X)$ and could therefore compute $W=w(X)=P(Y\hbox{=}1|X)$ exactly. For practical examples where this is not the case, we can instead train a neural network to implement a function $\what(X)$ that approximates 
$P(Y\hbox{=}1|X)$. For our toy example, we train a fully connected feedforward neural network to predict $Y$ from $X$ using cross-entropy loss;  it has 2 hidden layers, each with 256 neurons with ReLU activation, and a final linear layer with softmax activation, whose first neuron defines $\what(X)$.
A illustrated in \fig{contourFig}, the network prediction for the conditional probability 
$\what(X)\equiv P(Y\hbox{=}1)$ is fairly accurate, but slightly over-confident, 
tending to err on the side of predicting more extreme probabilities (further from $0.5$). The average KL-divergence between the predicted and actual conditional probability distribution $P(Y|X)$ is  about $0.004$, which causes negligible loss of information about $Y$.

\begin{figure}[h]
\begin{center}
\includegraphics[width=\columnwidth]{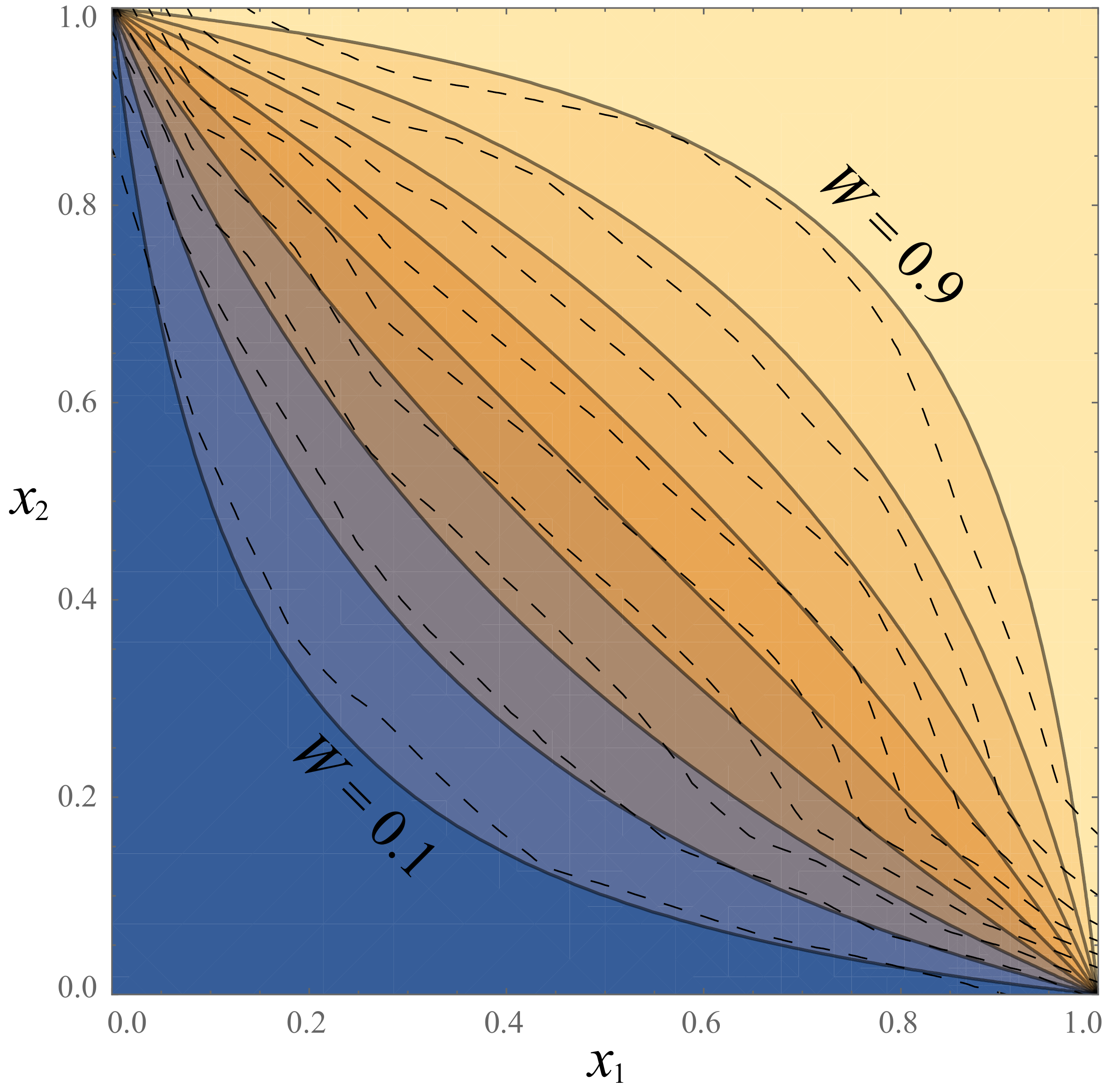}
\end{center}
\vskip-5mm
\caption{Contour plot of the function $W(x_1,x_2)$ computed both exactly using 
equation~\ref{analyticgEq} (solid curves) and approximately using a neural network (dashed curves).
}
\label{contourFig}
\end{figure}

\subsubsection{Approximating $f_1(W)$}

For practical examples where the conditional joint probability distribution $P(W,Y)$ cannot be computed analytically, we need to estimate it from the observed distribution of $W$-values output by the neural network. 
For our examples, we do this by fitting 
each probability distribution by a beta-distribution times the exponential of a polynomial of degree $d$: 
\beq{parametrizedFuncDefEq}
f(w,\a)\equiv 
\exp\left[\sum_{k=0}^d a_k x^k\right]x^{a_{d+1}}(1-x)^{a_{d+2}}, 
\eeq
where the coefficient $a_0$ is fixed by the normalization requirement $\int_0^1 f(w,\a)dw=1$.
We use this simple parametrization because it 
can fit any smooth distribution arbitrarily well for sufficiently large $d$, and 
provides accurate fits for the probability distributions in our examples using quite modest $d$; for example, $d=3$ gives 
$d_{KL}[ f_1(w),f(w,\a)]\approx 0.002$
for
\beqa{pFitLikelihoodEq}
\a&\equiv&\underset{\a'}{\rm argmin}\>\>d_{\rm KL}[f_1(w),f(w,\a')]\\
    &=&(-1.010, 2.319, -5.579, 4.887, 0.308, -0.307),\nonumber
\eeqa
which causes rather negligible loss of information about $Y$.
For our examples below where we do not  know the exact distribution $f_1(w)$ and merely have samples $W_i$ drawn from it, one for each element of the data set, we
instead perform the fitting by the standard technique of minimizing the cross entropy loss, \ie, 
\beq{pFitLikelihoodEq2}
\a\equiv \underset{\a'}{\rm argmin} \>\>-\sum_{k=1}^{n}\log f(W_k,\a').
\eeq
Table~\ref{ProbFitTable} lists the fitting coefficients used, and \fig{ProbFitsFig} illustrates the fitting accuracy.

\begin{table*}[]
{\footnotesize
\begin{tabular}{|l|l|r|r|r|r|r|r|r|}
\hline
Experiment   & Y &$a_0$& $a_1$ & $a_2$ & $a_3$  &$a_4$ &$a_5$&$a_6$\\
\hline                            
Analytic		&1	&0.0668	&-4.7685	&16.8993	&-25.0849		&13.758	&0.5797	&-0.2700\\
			&2	&0.4841	&-5.0106	&5.7863	&-1.5697		&-1.7180	&-0.3313	&-0.0030\\
\hline
Fashion-MNIST	&Pullover	&0.2878	&-12.9596 &44.9217	&-68.0105 &37.3126	&0.3547	&-0.2838\\
			&Shirt	&1.0821	&-23.8350	&81.6655	&-112.2720	&53.9602	&-0.4068	&0.4552\\
\hline
CIFAR-10		&Cat		&0.9230	&0.2165	&0.0859	&6.0013	&-1.0037	&0.8499\\
			&		&		&0.6795	&0.0511	&0.6838	&-1.0138	&0.9061\\ 
  			&Dog	&0.8970	&0.2132	&0.0806	&6.0013	&-1.0039	&0.8500\\
			&		&		&0.7872	&0.0144	&0.7974	&-0.9440	&0.7237\\			
\hline
MNIST		&One	&3.1188	&-65.224	&231.4	&-320.054	&150.779	&1.1226	&-0.6856\\
			&Seven	&-1.0325	&-47.5411	&189.895	&-269.28	&127.363	&-0.8219	&0.1284\\
\hline
\end{tabular}
\caption{Fits to the conditional probability distributions $P(W|Y)$ for our experiments, 
in terms of the parameters $a_i$ defined by \protect\eq{parametrizedFuncDefEq}.
\label{ProbFitTable}
}
}
\end{table*}

\subsection{MNIST, Fashion-MNIST and CIFAR-10}

The MNIST database consists of 28x28 pixel greyscale images of handwritten digits: 
60,000 training images and 10,000 testing images \cite{lecun2010mnist}.
We use the digits 1 and 7, since they are the two that are most frequently confused, relabeled as $Y=1$ (ones) and $Y=2$ (sevens). To increase difficulty, we inject 30\% of pixel noise, i.e., randomly flip each pixel with 30\% probability (see examples in \fig{classesFig}). For easy comparison with the other cases, we use the same number of samples for each class. 

The Fashion-MNIST database has the exact same format (60,000 + 10,000 28x28 pixel greyscale images), depicting not digits but 10 classes of clothing \cite{xiao2017fashion}. Here we again use the two most easily confused classes: pullovers ($Y=1$) and shirts ($Y=2$);  see \fig{classesFig} for examples.

The architecture of the neural network classifier we train on the above two datasets is adapted from here\footnote{We use the neural network architecture from \href{https://github.com/pytorch/examples/blob/master/mnist/main.py}{github.com/pytorch/examples/blob/master/mnist/main.py}; the only difference in architecture is that our output number of neurons is 2 rather than 10.}: two convolutional layers (kernel size 5, stride 1, ReLU activation) with 20 and 50 features, respectively, each of which is followed by max-pooling with kernel size 2. This is followed by a fully connected layer with 500 ReLU neurons and finally a softmax layer that produces the predicted probabilities for the two classes.  After training, we apply the trained model to the test set to obtain $W_i=P(Y|X_i)$ for each dataset.

CIFAR-10 \cite{cifar} is one of the most widely used datasets for machine learning research, and contains 60,000 $32\times 32$ color images in 10 different classes.
We use only the cat ($Y=1$) and dog ($Y=2$) classes, which are the two that are empirically hardest to discriminate;  see \fig{classesFig} for examples.
We use a ResNet18 architecture\footnote{The architecture is adapted from \href{https://github.com/kuangliu/pytorch-cifar}{github.com/kuangliu/pytorch-cifar}, for which we use its ResNet18 model; the only difference in architecture is that we use 2 rather than 10 output neurons.} \cite{he2016deep}. We train with a learning rate of 0.01 for the first 150 epochs, 0.001 for the next 100, and 0.0001 for the final 100 epochs; we keep all other settings the same as in the original repository.

\begin{figure}[h]
\begin{center}
\includegraphics[width=\columnwidth]{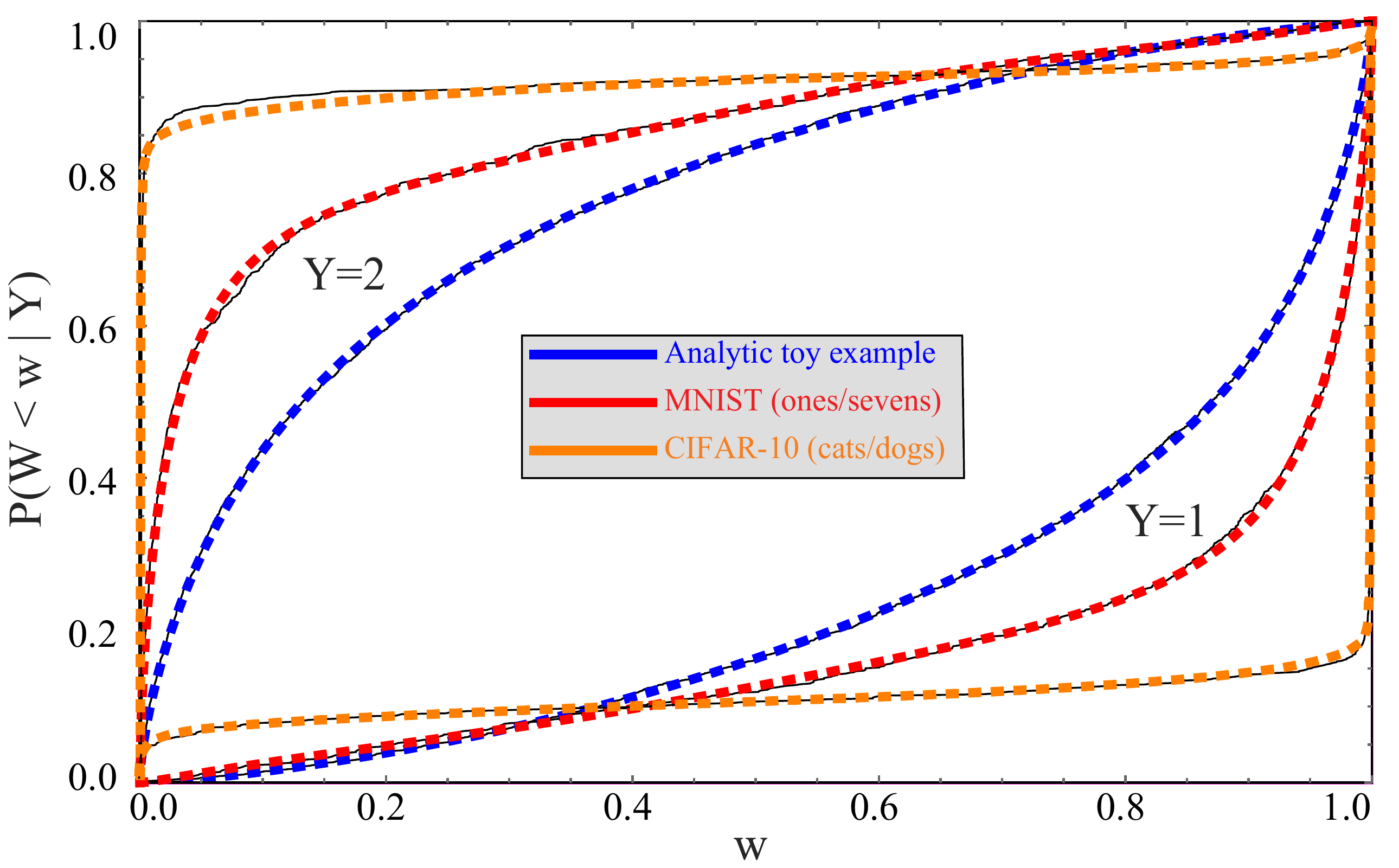}
\end{center}
\vskip-5mm
\caption{Cumulative distributions $F_i(w)\equiv P(W\hbox{$<$}w|Y\hbox{$=$}i)$ are shown for the analytic (blue/dark grey), Fashion-MNIST (red/grey) and CIFAR-10 (orange/light grey) examples. Solid curves show the observed cumulative histograms of $W$ from the neural network, and dashed curves show the fits defined by \eq{parametrizedFuncDefEq} and
Table~\ref{ProbFitTable}.
}
\label{ProbFitsFig}
\end{figure}

\Fig{ProbFitsFig} shows observed cumulative distribution functions $F_i(w)$ (solid curves) for the $W_i=P(Y=1|X_i)$ generated by the neural network classifiers, together with our above-mentioned analytic fits (dashed curves).\footnote{In the case of CIFAR-10, the observed distribution $f(w)$ was so extremely peaked near the endpoints that we replaced \eq{parametrizedFuncDefEq} by the more accurate fit 
\beqa{CIFARfitEq1}
f(w)&\equiv		& F'(w),\\
F(w)&\equiv		&\left\{
\begin{tabular}{lc}
$\a^A_0 F_*[w, \a^A]$			&if $w<1/2$,\\
$1 - (1 - \a^A_0) F_*[1 - w, \a^B]]$	&otherwise,
\end{tabular}
\right.\\
F_*(x)&\equiv&G\left[{(2 x)^{a_1}\over 2}\right],\\
G(x)&\equiv&\left[\left({x\over a_2}\right)^{a_3 a_4} + (a_5 + a_6 x)^{a_4}\right]^{1/a_4},\\
a_6&\equiv&2\left[(1 - (2 a_2)^{-a_3 a_4})^{1/a_4} - a_5\right],
\eeqa
where the parameters vectors $\a^A$ and $\a^B$ are given in Table~\ref{ProbFitTable} for both cats and dogs. For the cat case, this fit gives not $f(w)$ but $f(1-w)$.
Note that $F_*(0) = 0$, $F_*(1/2)=1$.
}
\Fig{CondProbFig} shows the corresponding conditional probability curves $P(Y=1|W)$ after remapping $W$ to have a uniform distribution as described above.  \Fig{ProbFitsFig} shows that the original $W$-distributions are strongly peaked around $W\approx 0$ and $W\approx 1$, so this remapping stretches the $W$-axis so as to shift probability toward more central values.
  
 \begin{figure}[h!]
\begin{center}
\includegraphics[width=\columnwidth]{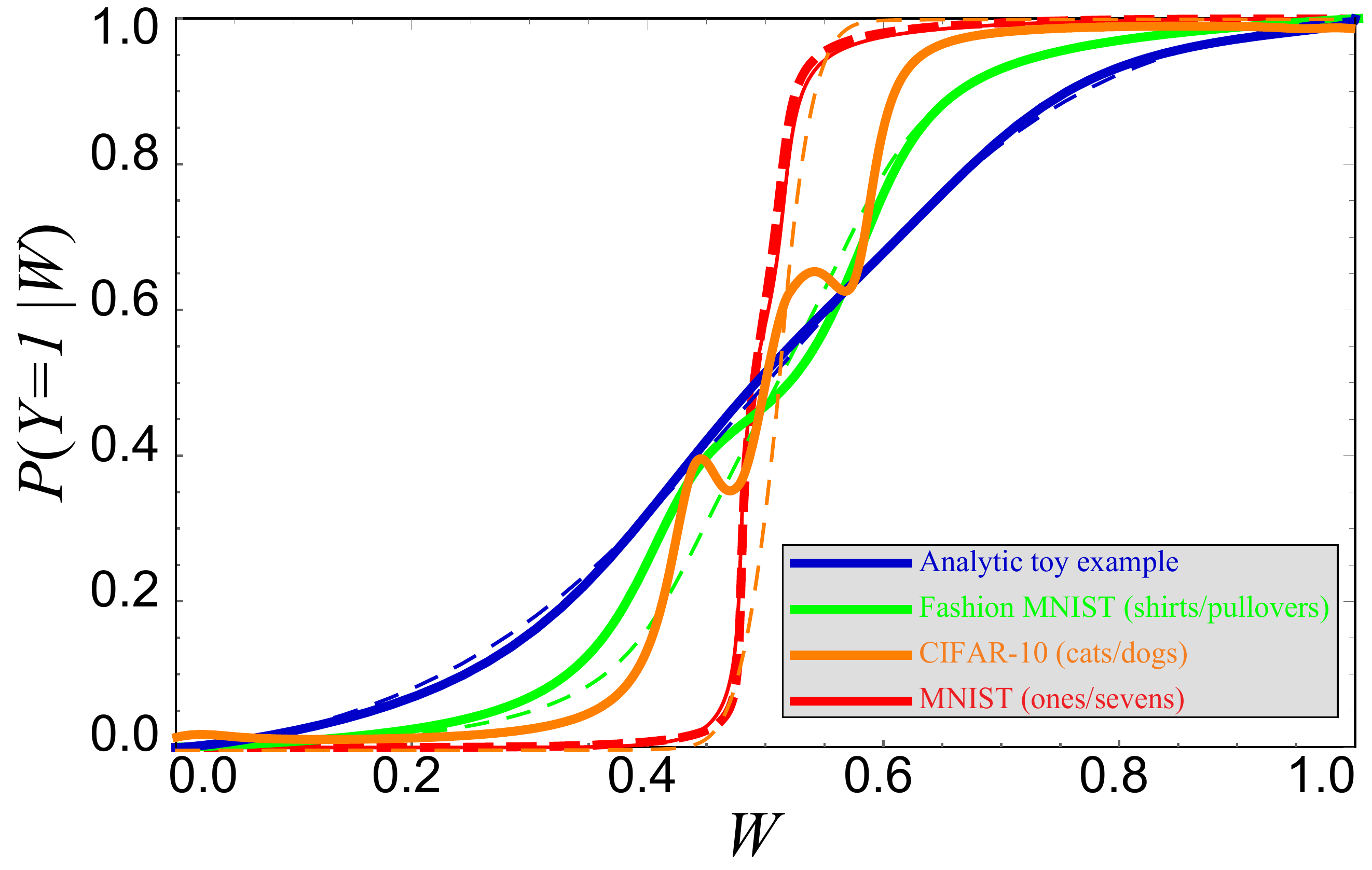}
\end{center}
\vskip-4mm
\caption{The solid curves show the actual conditional probability $P(Y\hbox{=}1|W)$ for CIFAR-10 (where the labels Y=1 and 2 correspond to ``cat" and ``dog") and MNIST with 20\% label noise (where the labels Y=1 and 2 correspond to ``1" and ``7") , respectively.
The color-matched dashed curves show the conditional probabilities predicted by the neural network; the reason that they are not diagonal lines  $P(Y\hbox{=}1|W)=W$ is that $W$ has been reparametrized to have a uniform distribution.
If the neural network classifiers were optimal, then solid and dashed curves would coincide.}
\label{CondProbFig}
\end{figure}

\begin{figure}[h]
\begin{center}
\includegraphics[width=\columnwidth]{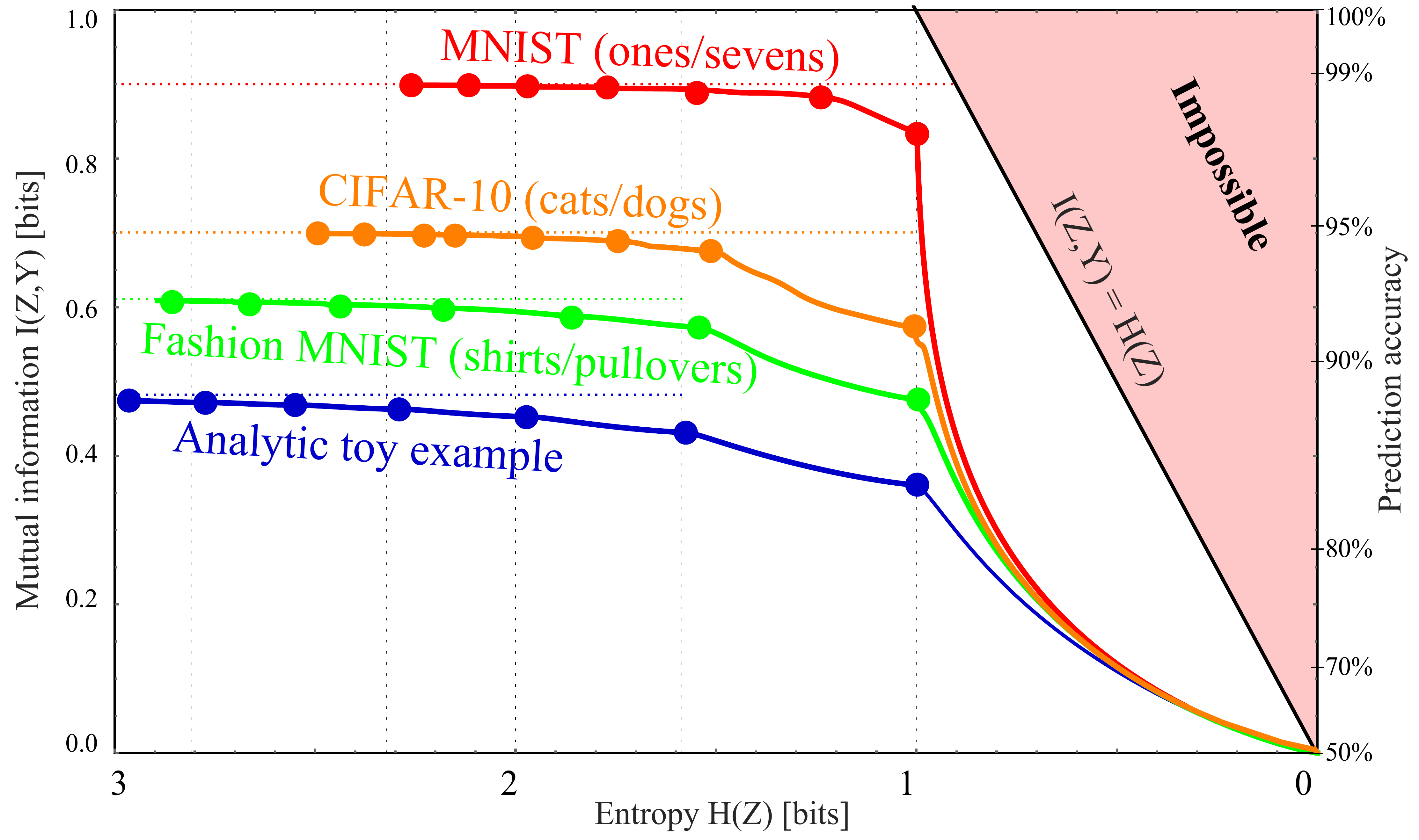}
\end{center}
\vskip-5mm
\caption{The Pareto frontier for compressed versions $Z=g(X)$ of our four datasets $X$, showing the maximum attainable class information $I(Z,Y)$ for a given entropy $H(Z)$.
The ``corners" (dots) correspond to the maximum $I(Z,Y)$ attainable when binning the likelihood $W$ into a given number of bins (2, 3, ...,8 from right to left). The horizontal dotted lines show the maximum available information $I(X,Y)$ for each case, reflecting that there is simply less to learn in some examples than in others.
}
\label{paretoFig}
\end{figure}

The final result of our calculations is shown in \Fig{paretoFig}: the Pareto frontiers for our four datasets, computed using our method. 

\subsection{Interpretation of our results}

To build intuition for our results, let us consider our CIFAR-10 example of images $X$ depicting cats ($Y\hbox{=}1$) and dogs  ($Y\hbox{=}2$) as in \fig{classesFig} and ask what aspects $Z=g(X)$ of an image $X$ capture the most information about the species $Y$.
Above, we estimated that 
$I(X,Y)\approx 0.69692$ 
bits, so what $Z$ captures the largest fraction of this information for a fixed entropy?
Given a good neural network classifier, a natural guess might be the single bit $Z$ containing its best guess, say ``it's probably a cat".
This corresponds to defining $Z=1$ if $P(Y\hbox{=}1|X)>0.5$, $Z=2$ otherwise, and gives the joint distribution 
$P_{ij}\equiv P(Y\hbox{=}i,Z\hbox{=}j$)
$$
\P=\left(
\begin{tabular}{cc}
0.454555		&0.045445\\
0.042725		&0.457275
\end{tabular}
\right)
$$
corresponding to $(Z,Y)\approx 0.56971$ bits.
But our results show that we can improve things in two separate ways.

First of all, if we only want to store one bit $Z$, then we can do better, corresponding to the first ``corner" in \Fig{paretoFig}:
moving the likelihood cutoff from $0.5$ to $0.51$, \ie, redefining $Z=1$ if $P(Y|X)>0.51$,
increases the mutual information to $I(Z,Y)\approx 0.56974$ bits.

More importantly, we are still falling far short of the $0.69692$ bits of information we had without data compression, capturing only 88\% of the available species information.
Our Theorem~\ref{Wtheorem} showed that we can retain {\it all} this information if we instead define $Z$ as the cat probability itself:
$Z\equiv W\equiv P(Y|X)$. For example, a given image might be compressed not into {\it ``It's probably a cat"} 
but into {\it ``I'm 94.2477796\% sure it's a cat"}.
However, it is clearly impractical to report the infinitely many decimals required to retain all the species information, which would make $H(Z)$ infinite.
Our results can be loosely speaking interpreted as the optimal way to round $Z$, so that the information  $H(Z)$ required to store it becomes finite.
We found that simply rounding to a fixed number of decimals is suboptimal; for example, if we pick 2 decimals and say
{\it ``I'm 94.25\% sure it's a cat"}, then we have effectively binned the probability $W$ into 10,000 bins of equal size, even though we can often do much better with bins of unequal size, as illustrated in the bottom panel of 
\fig{paretoAnalyticFig}. Moreover, when the probability $W$ is approximated by a neural network, we found that what should be optimally binned is not $W$ but the conditional probability $P(Y\hbox{=}1|W)$ illustrated in \fig{CondProbFig}
(``vertical binning").

It is convenient to interpret our Pareto-optimal data compression $X\mapsto Z$ as {\it clustering}, \ie, as a method of grouping our images or other data $X_i$ into clusters based on what information they contain about $Y$.
For example, \fig{classesFig} illustrates CIFAR-10 images clustered by their degree of ``cattiness"
into 5 groups $Z=1,...,5$ that might be nicknamed ``1.9\% cat", ``11.8\% cat", ``31.4\% cat", ``68.7\% cat" and ``96.7\% cat".  
This gives the joint distribution $P_{ij}\equiv P(Y\hbox{=}i,Z\hbox{=}j$) where
$$
\P=\left(
\begin{tabular}{ccccc}
0.350685	&0.053337	&0.054679	&0.034542	&0.006756\\
0.007794	 &0.006618	&0.032516	&0.069236	&0.383836
\end{tabular}
\right)
$$
and gives $I(Z,Y)\approx 0.6882$, thus increasing the fraction of species information retained from 
82\% to 99\%.

This is a striking result: we can group the images into merely five groups and discard all information about all images except which group they are in, yet retain 99\% of the information we cared about.
Such grouping may be helpful in many contexts. For example, given a large sample of labeled medical images of potential tumors, they can be used to define say five optimal clusters, after which future images can be classified into five degrees of cancer risk that collectively retain virtually all the malignancy information in the original images.

Given that the Pareto Frontier is continuous and corresponds to an infinite family of possible clusterings, which one is most useful in practice? Just as in more general multi-objective optimization problems, the most interesting points on the frontier are arguably its ``corners", indicated by dots in \fig{paretoFig}, where we do notably well on both criteria. 
This point was also made in the important paper \cite{strouse2019information} in the context of the DIB-frontier discussed below.
We see that the parts of the frontier between corners tend to be convex and thus rather unappealing, since any weighted average of $-H(Z)$ and $I(Z,Y)$ will be maximized at a corner.
Our results show that these corners can conveniently be computed without numerically tedious multiobjective optimization, by simply maximizing the mutual information $I(Z,Y)$ for $m=2, 3, 4, ...$ bins. The first corner, at $H(Z)=1$bit, corresponds to the learnability phase transition for DIB, \ie, the largest $\beta$ for which DIB is able to learn a non-trivial representation. In contrast to the IB learnability phase transition \citep{wu2019learnability,wu2019learnabilityEntropy} where $I(Z,Y)$ increases continuously from 0, here the $I(Y;Z)$ has a jump from 0 to a positive value, due to the non-concave nature of the Pareto frontier.

Moreover, all the examples in \fig{paretoFig} are seen to get quite close to the $m\to\infty$ asymptote 
$I(Z,Y)\to I(X,Y)$ for $m\simgt 5$, so the most interesting points on the Pareto frontier are simply the first handful of corners. For these examples, we also see that the greater the mutual information is, the fewer bins are needed to capture most of it. 

An alternative way if interpreting the Pareto plane in \fig{paretoFig} 
is as a traveoff between two evils: 
\beqa{InfoBloatLossEq}
\hbox{\bf Information bloat:~} H(Z|Y)&\equiv&H(Z)-I(Z,Y)\ge 0,\nonumber\\
\hbox{\bf Information loss:~~~~~~~~$\>$} \Delta I&\equiv&I(X,Y)-I(Z,Y)\ge 0.\nonumber
\eeqa
What we are calling the {\it information bloat} has also been called ``causal waste" \cite{thompson2018causal}. It is simply the conditional entropy of $Z$ given $Y$, and represents the excess bits we need to store in order to retain the desired information about $Y$. 
Geometrically, it is the horizontal distance to the impossible region to the right in \fig{paretoFig}, and we see that for MNIST, it takes local minima at the corners for both 1 and 2 bins. The {\it information loss} is simply the information discarded by our lossy compression of $X$. Geometrically, it is the vertical distance to the impossible region at the top of \fig{paretoAnalyticFig} (and, in \fig{paretoFig}, it is the vertical distance to the corresponding dotted horizontal line). As we move from corner to corner adding more bins, we typically reduce the information loss at the cost of increased information bloat. 
For the examples in \fig{paretoFig}, we see that going beyond a handful of bins essentially just adds bloat without significantly reducing the information loss.




\subsection{Real-world issues}

We just discussed how lossy compression is a tradeoff between information bloat and information loss. Let us now elaborate on the latter, for the real-world situation where 
$W\equiv P(Y\hbox{=}1|X)$ is approximated by a neural network. 

If the neural network learns to become perfect, then the function $w$ that it implements will be such that $W\equiv w(X)$ satisfies 
$P(Y=1|W)=W$, which corresponds to the dashed curves in \fig{CondProbFig} being identical to the solid curves. Although we see that this is close to being the case for the analytic and MNIST examples, the neural networks are further from optimal for Fashion-MNIST and CIFAR-10.
The figure illustrates that the general trend is for these neural networks to overfit and therefore be overconfident, predicting probabilities that are too extreme.

This fact that $P(Y\hbox{=}1|W)\ne W$ probably indicates that our Fashion-MNIST and CIFAR-10 classifiers $W=w(X)$ destroy information about $X$, but it does not prove this, because if we had a perfect lossless classifier $W\equiv w(X)$ satisfying $P(Y\hbox{=}1|W)=W$, then we could define an overconfident lossless classifier by an invertible (and hence information-preserving) reparameterization such as $W'\equiv W^2$ that violates the condition $P(Y\hbox{=}1|W')=W'$.


So how much information does $X$ contain about $Y$? One way to lower-bound $I(X;Y)$ uses the classification accuracy:
if we have a classification problem where $P(Y\hbox{=}1)=1/2$ and compress $X$ into a single classification bit $Z$ (corresponding to a binning of $W$ into two bins), then we can write the 
joint probability distribution for $Y$ and the guessed class $Z$ as
$$
P=\left(
\begin{tabular}{cc}
${1\over 2}-\epsilon_1$&$\epsilon_1$\\
$\epsilon_2$	&${1\over 2}-\epsilon_2$
\end{tabular}
\right).
$$
For a fixed total error rate $\epsilon\equiv\epsilon_1+\epsilon_2$,
Fano's Inequality implies that the  mutual information takes a minimum
\beq{PredictionAccuracyEq}
I(Z,Y)=1 + \epsilon\log\epsilon + (1 - \epsilon)\log(1 - \epsilon)
\eeq
when $\epsilon_1=\epsilon_2=\epsilon/2$, 
so if we can train a classifier that gives an error rate $\epsilon$, 
then the right-hand-side of \eq{PredictionAccuracyEq} places a lower bound on the mutual information $I(X,Y)$.
The prediction accuracy $1-\epsilon$ is shown for reference on the right side of \fig{paretoFig}. Note that getting close to one bit of mutual information requires extremely high accuracy; for example, 99\% prediction accuracy corresponds to only 0.92 bits of mutual information.
 
We can obtain a stronger estimated lower bound on $I(X,Y)$ from the cross-entropy loss function $\Ell$ used to train our classifiers:
\beq{LossEq}
\expec{\Ell} = -\left<\log P(Y\hbox{=}Y_i|X\hbox{=}X_i)\right> = H(Y|X) + \dKL,
\eeq
where $\dKL\ge 0$ denotes the average KL-divergence between true and predicted conditional probability distributions, and $\expec{\cdot}$ denotes ensemble averaging over data points, which implies that
\beqa{IboundEq}
I(X,Y)&=&H(Y) - H(Y|X) = H(Y) - \expec{\Ell} -\dKL\nonumber\\
	&\ge& H(Y) - \expec{\Ell}.
\eeqa
If $P(Y\hbox{=}1|W)\ne W$ as we discussed above, then $\dKL$ and hence the loss can be further reduced be recalibrating $W$ as we have done, which increases the information bound from 
\eq{IboundEq} up to the the value computed directly from the observed joint distribution $P(W,Y)$.

Unfortunately, without knowing the true probability $p(Y|X)$, there is no rigorous and practically useful {\it upper} bound on the mutual information other than the trivial inequality $I(X,Y)<H(Y)=1$ bit, as the following simple counterexample shows: suppose our images $X$ are encrypted with some encryption algorithm that is extremely time-consuming to crack, rendering the images for all practical purposes indistinguishable from random noise. Then any reasonable neural network will produce a useless classifier giving $I(W,Y)\approx 0$ even though the true mutual information $I(X,Y)$ could be as large as one bit. In other words, we generally cannot know the true information loss caused by compressing $X\mapsto W$, so the best we can do in practice is to pick a corner reasonably close to the upper asymptote in \fig{paretoFig}.

\begin{figure}[h]
\begin{center}
\includegraphics[width=\columnwidth]{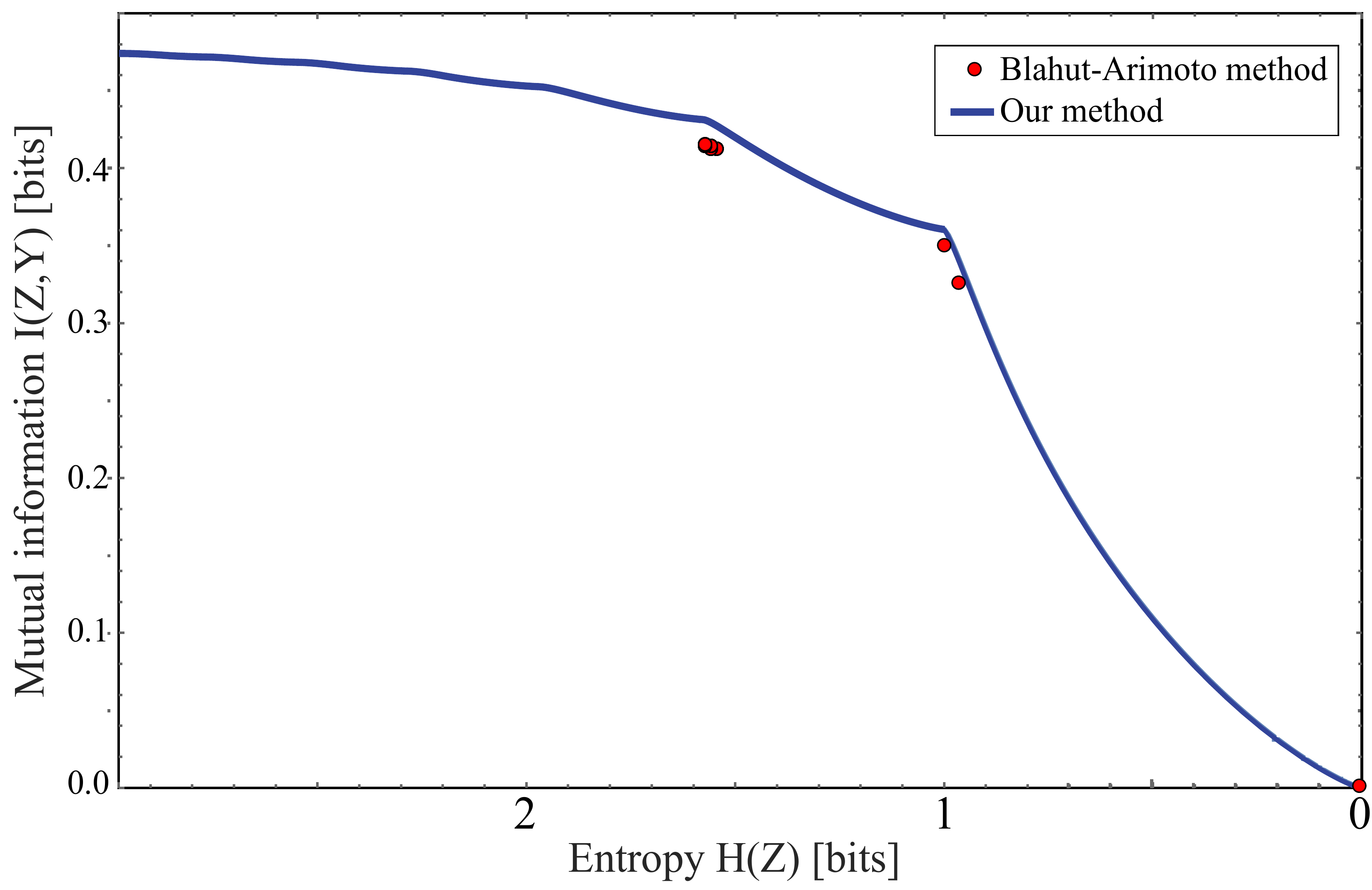}
\end{center}
\vskip-5mm
\caption{The Pareto frontier our analytic example is computed exactly with our method (solid curve) and approximately with the Blahut-Arimoto method (dots).}
\label{BlahutArimotoFig}
\end{figure}

\begin{table}[]
{\footnotesize
\begin{tabular}{|c|cc|}
\hline
$H(Z)$&\multicolumn{2}{c|}{$I(Z,Y)$}\\
        &BA-method&Our method\\
\hline                            
0.0000&0.0000&0.0000\\
0.9652&0.3260&0.3421\\
0.9998&0.3506&0.3622\\
1.5437&0.4126&0.4276\\
1.5581&0.4126&0.4298\\
1.5725&0.4141&0.4314\\
\hline
\end{tabular}
\caption{The approximate Pareto frontier points for our analytic example computed with the Blahut-Arimoto (BA) method compared with the points for those same six $H$-values computed with our exact method. 
\label{BlahutArimotoTable}
}
}
\end{table}

\subsection{Performance compared with Blahut-Arimoto method}

The most commonly used technique to date for finding the Pareto frontier is the Blahut-Arimoto (BA) method 
\cite{blahut1972computation,arimoto1972algorithm}
applied to the DIB objective of \eq{IBeq2} as described in  \cite{strouse2017deterministic}. 
\fig{BlahutArimotoFig} and Table~\ref{BlahutArimotoTable} shows the BA method implemented as in \cite{strouse2019information}, applied to our above-mentioned analytic toy example, after binning using 2,000 equispaced $W$-bins and $Z\in{1,...,8}$, scanning the $\beta$-parameter
from \eq{IBeq2} from $10^{-10}$ to $1$ in 20,000 logarithmically equispaced steps. Our method is seen to improve on 
the BA method in two ways. First, our method finds the entire continuous frontier, whereas the BA method finds only six discrete disconnected points. This is because the BA-method tries to maximize the the DIB-objective from \eq{IBeq2} and thus cannot discover points where the Pareto frontier is convex as discussed above.
Second, our method finds the exact frontier, whereas the BA-method finds only approximations, which are seen to generally lie below the true frontier.

\section{Conclusions \& Discussion}
\label{ConclusionsSec}


We have presented a method for mapping out the Pareto frontier for classification tasks (as in \Fig{paretoFig}), reflecting the tradeoff between retained entropy and class information. 
In other words, we have generalized the quest for maximizing raw classification accuracy to that of 
mapping the full Pareto frontier corresponding to the accuracy-complexity tradeoff. The optimal soft classifiers 
that we have studied  (corresponding to points on the Pareto frontier) are useful for the same reason that
the DIB method is useful: {\eg} overfitting less and therefore generalizing better.

We first showed how a random variable $X$ (an image, say) drawn from a class $Y\in\{1,...,n\}$ can be distilled into a vector $W=f(X)\in \mathbb{R}^{n-1}$ losslessly, so that $I(W,Y)=I(X,Y)$.
For the $n=2$ case of binary classification, we then showed how the Pareto frontier is swept out by a one-parameter family of binnings of $W$ into a discrete variable $Z=g_\beta(W)\in\{1,...,m_\beta\}$ that corresponds to binning $W$ into $m_\beta=2,3,...$ bins, such that $I(Z,Y)$ is maximized for each fixed entropy $H(Z)$.
Our method efficiently finds the exact Pareto frontier, significantly outperforming the 
Blahut-Arimoto (BA) method \cite{blahut1972computation,arimoto1972algorithm}.
Our MATLAB code for computing the Pareto frontier is freely available on 
GitHub.\footnote{A MATLAB implementation is available here:\\
\url{https://github.com/tailintalent/distillation}.}

\subsection{Relation to Information Bottleneck}

As mentioned in the introduction, the Discrete Information Bottleneck (DIB) method \cite{strouse2017deterministic}
maximizes a linear combination $I(Z,Y)-\beta H(Z)$ of the two axes in \fig{paretoFig}.
We have presented a method solving a generalization of the DIB problem.
The generalization lies in 
switching the objective  from \eq{IBeq2} to \eq{ParetoDefEq}, which has the advantage of discovering the full Pareto frontier in \fig{paretoFig} instead of merely the corners and concave parts (as mentioned, the DIB objective cannot discover convex parts of the frontier).
The solution lies in our proof that the frontier is spanned by binnings of the likelihood into $2, 3, 4, ...$ bins, which enables it to be computed more efficiently than with the
iterative/variational method of \cite{strouse2017deterministic}.

The popular original Information Bottleneck (IB) method \cite{tishby2000information} generalizes DIB by allowing the compression function $g(X)$ to be non-deterministic, thus adding noise that is independent of $X$. Starting with a Pareto-optimal $Z\equiv g(X)$ and adding such noise will simply shift us straight to the left in \fig{paretoFig}, away from the frontier (which is by definition monotonically decreasing) and into the Pareto-suboptimal region in the $I(Y;Z)$ vs. $H(Z)$ plane. As shown in \cite{strouse2017deterministic}, IB-compressions tend to altogether avoid the rightmost part of \fig{paretoFig}, with an entropy $H(Z)$ that never drops below some fixed value independent of $\beta$.


 
 
 \subsection{Relation to phase transitions in DIB learning}

Recent work has revealed interesting phase transitions that occur during information bottleneck learning \cite{chechik2005information,strouse2017deterministic,wu2019learnability,wu2019learnabilityEntropy},
as well as phase transitions in other objectives, e.g. $\beta$-VAE \cite{rezende2018taming}, infoDropout \cite{achille2018emergence}.
Specifically, when the $\beta$-parameter that controls the tradeoff between information retention and model simplicity is continuously adjusted, the resulting point in the IB-plane can sometimes ``get stuck" or make discontinuous jumps.
For the DIB case,  our results provide an intuitive understanding of these phase transitions in terms of the geometry of the Pareto frontier.

Let us consider \fig{paretoAnalyticFig} as an example. 
The DIB maximiziation of $I(Z,Y)-\beta H(Z)$ geometrically corresponds to finding a tangent line of the Pareto frontier of slope $-\beta$. 

If the Pareto frontier $I_*(H)$ were everywhere continuous and concave, so that $I_*''(H)<0$,
then its slope would range from some steepest value $-\beta_*$ 
at the right endpoint $H=0$ and continuously flatten out as we move leftward, asymptotically approaching 
zero slope as $H\to\infty$.
The learnability phase transition studied in \cite{wu2019learnability,wu2019learnabilityEntropy} would then occur when $\beta=\beta_*$:
for any $\beta\ge\beta_*$, the DIB method learns nothing, \eg, discovers as optimal the point 
$(H,I)=(0,0)$ where $Z$ retains no information whatsoever about $Y$.
As $\beta\le\beta_*$ is continuously reduced, the DIB-discovered point would then continuously move up and to the left along the Pareto frontier.

This was for the case of an everywhere concave frontier, but Figures~\ref{paretoAnalyticFig} and~\ref{paretoFig} show that actual Pareto frontiers need {\it not} be concave --- indeed, none of the frontiers that we have computed are concave. Instead, they are seen to consist of long convex segments joint together by short concave pieces near the ``corners".  
This means that as $\beta$ is continuously increased, the DIB solution exhibits first-order phase transitions, making discontinuous jumps from corner to corner at certain critical $\beta$-values; these phase transitions correspond to increasing the number of clusters into which the data $X$ is grouped.

\subsection{Outlook}

Our results suggest a number of opportunities for further work, ranging from information theory to machine learning, neuroscience and physics.

As to information theory, it will be interesting to try to generalize our method from binary classification into classification into more than two classes. Also, one can ask if there is a way of pushing 
the general information distillation problem all the way to bits.
It is easy to show that a discrete random variable $Z\in\{1,...,m\}$ can always be encoded as 
$m-1$ independent random bits (Bernoulli variables) $B_1,...,B_{m-1}\in\{0,1\}$, defined by\footnote{The mapping $z$ from bit strings $\B$ to integers $Z\equiv z(\B)$
is defined so that $z(\B)$ is the position of the last bit that equals one when $\B$ is preceded by a one. For example, for $m=4$, the mapping from length-3 bit strings 
$\B\in\{0,1\}^3$ to integers $Z\in\{1,...,4\}$
is 
$z(001)=z(011)=z(101)=z(111)=4$, 
$z(010)=z(110)=3$,
$z(100)=2$,
$z(000)=1$.
}
\beq{BitProbEq}
P(B_k\hbox{=}1)=P(Z\hbox{=}k+1)/P(Z\le k+1),
\eeq
although this generically requires some information bloat.
So in the spirit of the introduction, is there some useful way of generalizing 
PCA, autoencoders, CCA and/or the method we have presented so that the quantities
$Z_i$ and $Z'_i$ in Table~\ref{ComparisonTable} are not real numbers but bits?

As to neural networks, it is interesting to explore novel classifier architectures that reduce the overfitting and resulting overconfidence revealed by \fig{CondProbFig}, as this might significantly increase the amount of information we can distill into our compressed data. It is important not to complacently declare victory just because classification accuracy is high; as mentioned, even 99\% binary classification accuracy can waste 8\% of the information.


As to neuroscience, our discovery of optimal ``corner" binnings begs the question of whether
evolution may have implemented such categorization in brains.
For example, if some binary variable $Y$ that can be inferred from visual imagery is evolutionarily important for a given species (say, whether potential food items are edible), might our method help predict how many distinct colors $m$ their brains have evolved to classify hues into? In this example, $X$ might be a triplet of real numbers corresponding to light intensity recorded by three types of retinal photoreceptors, and the integer $Z$ might end up corresponding so some definitions of yellow, orange, {\etc}
A similar question can be asked for other cases where brains define finite numbers of categories, for example categories defined by distinct words. 

As to physics, it has been known even since the introduction of Maxwell's Demon that a physical system can use information about its environment to extract work from it. If we view an evolved life form as an intelligent agent seeking to perform such work extraction, then it faces a tradeoff between retaining too little relevant infomation (consequently extrating less work) and retaining too much (wasting energy on information processing and storage). 
Susanne Still recently proved the remarkable physics result \cite{still2017thermodynamic} that the lossy data compression optimizing such work extraction efficiency is precisely that prescribed by the above-mentioned Information Bottleneck method \cite{tishby2000information}. As she puts it, an intelligent data representation strategy emerges from the optimization of a fundamental physical limit to information processing.
This derivation made minimal and reasonable seeming assumptions about the physical system, but did not include an energy cost for information encoding. We conjecture that this can be done such that an extra Shannon coding term proportional to $H(Z)$ gets added to the loss function, which means that when this term dominates, the generalized Still criterion would instead prefer the Deterministic Information Bottleneck or one of our Pareto-optimal data compressions. 

Although noise-adding IB-style data compression may turn out to be commonplace in many biological settings, it is striking that the types of data compression we typically associate with human perception intelligence appears more deterministic, in the spirit of DIB and our work. For example, when we compress visual input into ``this is a probably a cat", we do not typically add noise by deliberately flipping our memory to ``this is probably a dog". Similarly, 
the popular jpeg image compression algorithm dramatically reduces image sizes while retaining essentially all information that we humans find relevant, and does so deterministically, without adding noise.

It is striking that simple information-theoretical principles such as IB, DIB and Pareto-optimality appear relevant
across the spectrum of known intelligence, ranging from extremely simple 
physical systems as in Still's work all the way up to high-level human perception and cognition.
This motivates further work on the exciting quest for a deeper understanding of Pareto-optimal data compression 
and its relation to neuroscience and physics. 

\bigskip

{\bf Acknowledgements:} 
The authors wish to thank Olivier de Weck for sharing the AWS multiobjective optimization software. 
This work was supported by The Casey and Family Foundation, the Ethics and Governance of AI Fund, the Foundational Questions Institute, the Rothberg Family Fund for Cognitive Science  
and by Theiss Research through TWCF grant \#0322.
The opinions expressed in this publication are those of the authors and do not necessarily reflect the views of the funders.

\appendix

 \section{Binning can be practically lossless}
\label{LosslessBinningAppendix}

If the conditional probability distribution $p_1(w)\equiv P(Y\hbox{=}1|W\hbox{=}w)$
is a slowly varying function and the range of $W$ is divided into tiny bins, then 
$p_1(w)$ will be almost constant within each bin and so binning $W$ (discarding information about the exact position of $W$ within a bin) should destroy almost no information about $Y$.
This intuition is formalized by the following theorem, which says that a random variable $W$
 can be binned into a finite number of bins at the cost of losing arbitrarily little information about $Y$.
\begin{theorem}
\label{LosslessBinningTheorem}
Binning can be practically lossless: Given a random variable $Y\in\{1,2\}$ and
a uniformly distributed random variable $W \in [0,1]$
such that the conditional probability distribution 
$p_1(w)\equiv P(Y\hbox{=}1|W\hbox{=}w)$ is monotonic,
there exists for any real number $\epsilon>0$ 
a vector $\b\in\mathbb{R}^{N-1}$ of bin boundaries such 
that the information reduction
$$\Delta I\equiv I[W,Y]  - I[B(W,\b),Y] < \epsilon,$$
where $B$ is the binning function defined by
\eq{BinningFuncDef}.
\end{theorem}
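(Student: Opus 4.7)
My plan is to reduce the statement to a uniform continuity estimate for the binary entropy $h(p)\equiv -p\log p - (1-p)\log(1-p)$. Since $I(W,Y) = H(Y) - H(Y|W)$ and the binning $W'\equiv B(W,\b)$ leaves $H(Y)$ unchanged, bounding $\Delta I$ is equivalent to bounding $H(Y|W') - H(Y|W)$. Because $W$ is uniform on $[0,1]$, I would write
\begin{equation*}
H(Y|W) = \int_0^1 h(p_1(w))\,dw,\qquad H(Y|W') = \sum_{k=1}^{N} |I_k|\, h\bigl(\bar p_1^{(k)}\bigr),
\end{equation*}
where $I_k\equiv(b_{k-1},b_k]$ (with $b_0\equiv 0$, $b_N\equiv 1$) and $\bar p_1^{(k)}\equiv |I_k|^{-1}\int_{I_k} p_1(w)\,dw$. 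This is exactly the replacement of $p_1$ by its bin-averaged step function $\bar p_1$ from \fig{fineBinningFig}.

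Next I would observe that $h$ is concave on $[0,1]$, so Jensen's inequality immediately gives $\Delta I\ge 0$; the work is in the upper bound. Since $h$ is continuous on the compact interval $[0,1]$ (taking $0\log 0\equiv 0$), it is uniformly continuous: let $\omega(\eta)\equiv\sup\{|h(p)-h(q)|:p,q\in[0,1],\,|p-q|\le\eta\}$, which satisfies $\omega(\eta)\to 0$ as $\eta\to 0^+$. If the oscillation of $p_1$ on $I_k$ is at most $\eta_k$, then $\bar p_1^{(k)}$ and every $p_1(w)$ for $w\in I_k$ differ by at most $\eta_k$, and integrating yields
\begin{equation*}
\Delta I \;\le\; \sum_{k=1}^{N} |I_k|\,\omega(\eta_k).
\end{equation*}

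To make every $\eta_k$ small I would use monotonicity of $p_1$ to choose the bin boundaries adaptively. Let $[m,M]\subseteq[0,1]$ be the range of $p_1$ and set
\begin{equation*}
b_k \;\equiv\; \sup\bigl\{w\in[0,1]\,:\,p_1(w)\le m + k(M-m)/N\bigr\},\qquad k=1,\dots,N-1.
\end{equation*}
Monotonicity of $p_1$ then forces the oscillation on each $I_k$ to be at most $(M-m)/N\le 1/N$, so
\begin{equation*}
\Delta I \;\le\; \omega(1/N)\sum_{k=1}^{N}|I_k| \;=\; \omega(1/N).
\end{equation*}
Choosing $N$ large enough that $\omega(1/N)<\epsilon$ produces the required $\b\in\mathbb{R}^{N-1}$.

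The main obstacle I anticipate is that $h$ has unbounded derivative at $0$ and $1$, so a naive Lipschitz estimate fails and a naively uniform binning (all $|I_k|=1/N$) does not control $\eta_k$ near the endpoints. The remedy is precisely the two ingredients above: use uniform continuity rather than a Lipschitz bound, and equipartition the \emph{range} of $p_1$ rather than its domain. A minor technical point is that $p_1$ need not be strictly monotonic or continuous, so some of the $b_k$'s defined by the supremum above may coincide, producing empty bins; these can be discarded (reducing $N$) without affecting the estimate, since the bound depends only on the maximum oscillation per bin.
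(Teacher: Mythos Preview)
Your argument is correct and is a genuinely different---and in some ways cleaner---route than the paper's. The paper works with $h(x)=-x\log x$ applied separately to $p_1$ and $p_2=1-p_1$; since this $h$ is not monotone on $[0,1]$, the authors build an auxiliary monotone function $h_*$ (reflecting $h$ about its maximum at $x=e^{-1}$), set $h_+(w)=h_*[p_1(w)]-h_*[p_2(w)]$, and then equipartition the \emph{range of $h_+$} to place bin boundaries. This yields the explicit quantitative bound $\Delta I<6/N$. You instead work with the full binary entropy $h(p)=-p\log p-(1-p)\log(1-p)$, invoke its uniform continuity on $[0,1]$ via a modulus $\omega$, and equipartition the \emph{range of $p_1$} directly, obtaining $\Delta I\le\omega(1/N)$. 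Your approach avoids the $h_*$, $h_+$ construction entirely and makes transparent why monotonicity of $p_1$ is the only structural hypothesis needed; the paper's approach, on the other hand, delivers an explicit rate rather than an abstract modulus-of-continuity bound, which is useful if one wants to know how many bins suffice for a given $\epsilon$. Your closing remark about possible coincident $b_k$'s is well placed; note also that jump discontinuities of $p_1$ only affect $\eta_k$ on a set of measure zero, so it is the \emph{essential} oscillation that enters the integral bound, and this is indeed controlled by your range-partition construction.
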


\begin{proof}
The binned bivariate probability distribution is 
\beq{FineBinnedProbEq}
P_{ij}\equiv P(Z\hbox{=}j,Y=i) = \int_{b_{j-1}}^{b_j}p_i(w)dw
\eeq
with marginal distribution
\beq{binnedMarginalEq}
P^Z_j\equiv P(Z\hbox{=}j) = b_j-b_{j-1}.
\eeq
Let $\pbar_i(w)$ denote the piecewise constant function 
that in the $j^{\rm th}$ bin $b_{j-1}<w\le b_j$ takes the average value of $p_i(w)$ 
in that bin, \ie, 
\beq{pbarDefEq}
\pbar_i(w)\equiv {1\over b_j-b_{j-1}}\int_{b_{j-1}}^{b_j}p_i(w)dw={P_{ij}\over P^Z_j}.
\eeq
These definitions imply that
\beq{sumIntegralEq}
-\sum_{j=1}^N P_{ij}\log{P_{ij}\over P^Z_j}=\int_0^1 h\left[\pbar_i(w)\right]dw,
\eeq
where $h(x)\equiv -x\log x$.
Since $h(x)$ vanishes at $x=0$ and $x=1$ and takes its intermediate maximum 
value 
at $x=1/e$, the function 
\beq{hstarDefEq}
h_*(x)\equiv
\left\{
\begin{tabular}{ll}
$h(x)$			&if $x<e^{-1}$,\\
$2h(e^{-1})-h(x)$	&if $x\ge e^{-1}$\\
\end{tabular}
\right.
\eeq
is continuous and increases monotonically for
$x\in [0,1]$,
with $h'_*=|h'(x)|$.
This means that if we define the non-negative monotonic function
$$h_+(w)\equiv h_*[p_1(w)] - h_*[p_2(w)],$$
it changes at least as fast as either of its terms, so that 
for any $w_1$, $w_2\in [0,1]$, we have
\beqa{hChangeBoundEq}
\left|h\left[p_i(w_2)\right] - h\left[p_i(w_1)\right]\right|
&\le&\left|h_*\left[p_i(w_2)\right] - h_*\left[p_i(w_1)\right]\right|\nonumber\\
&\le&|h_+(w_2)-h_+(w_1)|.
\eeqa
We will exploit this bound to limit how much $h\left[p_i(w)\right]$ can vary within a bin.
Since $h_+(0)\ge 0$ and $h_+(1)\le 2h_*(1)=4/e\ln 2\approx 2.12 < 3$,
we pick $N-1$ bins boundaries $b_k$ implicitly defined by
\beq{binPlacementEq}
h_+(b_j)=h_+(0) + [h_+(1)-h_+(0)] {j\over N}
\eeq
for some integer $N\gg 1$.
Using \eq{hChangeBoundEq}, this implies that 
\beq{pbarBoundEq}
\left|h\left[\pbar_i(w)\right] - h\left[p_i(w)\right]\right|\le {h_+(1)-h_+(0)\over N}<{3\over N}.
\eeq

The mutual information between two variables is given by 
$I(Y,U)=H(Y)-H(Y|U)$, where the second term (the conditional entropy is given by the following expressions in the cases that we need: 
\beqa{BinnedConditionalEntropyEq}
H(Y|Z)&=&-\sum_{i=1}^N\sum_{j=1}^2 P_{ij}\log{P_{ij}\over P_i},\\
H(Y|W)&=&-\sum_{i=1}^2 \int_0^1 p_i(w)\log p_i(w)dw\label{HYWeq}.
\eeqa
The information loss caused by our binning is therefore
\beqa{DeltaIeq}
\Delta I&=&I(W,Y)  - I(Z,Y) = H(Y|Z)  - H(Y|W)\nonumber\\
           &=&-\sum_{i=1}^2\left( \sum_{j=1}^N P_{ij}\log{P_{ij}\over P_j^Z}+\int_0^1 h\left[p_i(w)\right]dw\right)\nonumber\\
           &=&\sum_{i=1}^2 \int_0^1 \left(h\left[\pbar_i(w)\right] - h\left[p_i(w)\right]\right)dw\nonumber\\
           &\le&\sum_{i=1}^2 \int_0^1 \left|h\left[\pbar_i(w)\right] - h\left[p_i(w)\right]\right|dw\nonumber\\
           &<&\sum_{i=1}^2 \int_0^1 {3\over N} = {6\over N},
\eeqa
where we used \eq{sumIntegralEq}  to obtain the $3^{\rm rd}$ row and
\eq{pbarBoundEq} to obtain the last row.
This means that however small an information loss tolerance $\epsilon$ we want, 
we can guarantee $\Delta I<\epsilon$ by choosing $N>6/\epsilon$ bins placed according to 
\eq{binPlacementEq}, which completes the proof.
\end{proof}

Note that the proof still holds if the function $p_i(w)$ is not monotonic, as long as the number of times $M$ that it changes direction is finite: in that case, we can simply repeat the above-mentioned binning procedure separately in the $M+1$ intervals where $p_i(w)$ is monotonic, using 
$N>6/\epsilon$ bins in each interval, \ie, a total of $N>6M/\epsilon$ bins.


\section{More varying conditional probability boosts mutual information}


Mutual information is loosely speaking a measure of how far a probability distribution $P_{ij}$ is from being separable,
\ie, a product of its two marginal distributions.\footnote{Specifically, the mutual information is the 
Kullback–Leibler divergence between the bivariate probability distribution and the product of its marginals.}
If all conditional probabilities for one variable $Y$ given the other variable $Z$ are identical, then the distribution is separable and the mutual information $I(Z,Y)$ vanishes, so one may intuitively expect that 
making conditional probabilities more different from each other will increase $I(Z,Y)$.
The following theorem formalizes this intuition in a way that enables Theorem~\ref{ContiguousBinningTheorem}.
\begin{theorem}
\label{informationTheorem}
Consider two discrete random variables $Z\in\{1,...,n\}$ and $Y\in\{1,2\}$ 
and define $P_i\equiv P(Z=i)$,\\
 $p_i\equiv P(Y=1|Z=i)$, so that the joint probability distribution
$P_{ij}\equiv P(Z=i,Y=j)$ is given by\\
$P_{i1} = P_i p_i$, $P_{i2} = P_i (1- p_i)$.
If two conditional probabilities $p_k$ and $p_l$ differ, then we increase the mutual information 
$I(Y,Z)$ if we bring them further apart by adjusting $P_{kj}$ and $P_{lj}$ in such a way that both marginal distributions remain unchanged.
\end{theorem}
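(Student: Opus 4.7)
The plan is to reduce the claim to the strict concavity of the binary entropy function. Write $I(Z,Y)=H(Y)-H(Y|Z)$. Because both marginal distributions are preserved, $H(Y)$ is held fixed (it depends only on $P(Y=1)$), and the marginal $P_i$ of $Z$ is held fixed. So it suffices to show that the conditional entropy
\[
H(Y|Z)=\sum_{i=1}^n P_i\,h(p_i),\qquad h(p)\equiv -p\log p-(1-p)\log(1-p),
\]
strictly decreases under the allowed move. Since only $P_{kj}$ and $P_{lj}$ are modified, only the two terms $P_k h(p_k)+P_l h(p_l)$ can change, so the whole problem reduces to analysing this pair.

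Next, I would extract the exact constraint that preserving the two marginals imposes on $(p_k,p_l)$. Preserving the $Z$-marginal fixes $P_k$ and $P_l$, while preserving the $Y$-marginal forces $P_{k1}+P_{l1}=P_k p_k+P_l p_l$ to be constant. Equivalently, the weighted mean
\[
\bar p\equiv\frac{P_k p_k+P_l p_l}{P_k+P_l}
\]
is fixed, so the pair $(p_k,p_l)$ is constrained to a line segment in $[0,1]^2$ with a single degree of freedom.

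Finally, I would invoke the strict concavity of $h$ on $[0,1]$ (where $h''(p)=-1/[p(1-p)\ln 2]<0$). Parametrising the constraint by $p_k$ and using $p_l=\bar p+(P_k/P_l)(\bar p-p_k)$, one finds
\[
\frac{d^{2}}{dp_k^{2}}\bigl[P_k h(p_k)+P_l h(p_l)\bigr]=P_k h''(p_k)+\frac{P_k^{2}}{P_l}h''(p_l)<0,
\]
so $P_k h(p_k)+P_l h(p_l)$ is a strictly concave function of $p_k$ along the constraint line, attaining its unique maximum at $p_k=p_l=\bar p$. Since moving $p_k$ and $p_l$ further apart on the constraint line corresponds to moving away from this maximum (in either direction), $P_k h(p_k)+P_l h(p_l)$ strictly decreases, $H(Y|Z)$ strictly decreases, and $I(Z,Y)$ strictly increases, as claimed. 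The main obstacle is purely notational rather than mathematical: ensuring that ``further apart'' is interpreted as motion along the one-dimensional affine constraint (where the distance $|p_k-p_l|$ is a strictly monotonic function of how far we have moved from the symmetric point $p_k=p_l=\bar p$), and verifying that endpoints $p_k,p_l\in\{0,1\}$ cause no trouble since $h$ remains continuous and strictly concave on the closed interval.
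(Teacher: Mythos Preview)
Your argument is correct, but it proceeds differently from the paper. The paper parametrises the marginal-preserving perturbation explicitly by a single scalar $\epsilon$ (shifting $\epsilon$ of mass from $P_{k1}$ to $P_{l1}$ and back in the second row), differentiates $I(Z,Y)$ directly with respect to $\epsilon$, and obtains the closed-form expression
\[
\frac{\partial I(Z,Y)}{\partial\epsilon}\bigg|_{\epsilon=0}=\log\!\left[\frac{1/p_k-1}{1/p_l-1}\right]>0,
\]
which is positive whenever $p_k<p_l$. This is a purely local statement: for sufficiently small $\epsilon>0$, the mutual information goes up. You instead decompose $I(Z,Y)=H(Y)-H(Y|Z)$, freeze $H(Y)$ via the $Y$-marginal constraint, and reduce everything to the strict concavity of the binary entropy $h$ along the one-dimensional constraint line $P_kp_k+P_lp_l=\text{const}$. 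Your route is more conceptual and actually yields a stronger, \emph{global} monotonicity conclusion (the mutual information keeps increasing all the way along the line, not just infinitesimally), at the cost of invoking a second-derivative/concavity argument rather than a one-line first-derivative computation. Both are adequate for the paper's purposes, since Theorem~\ref{ContiguousBinningTheorem} only needs the existence of \emph{some} strictly improving perturbation, which the local version already supplies.
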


\begin{proof}
The only such change that keep the marginal distributions for both $Z$ and $Y$ unchanged takes the form 
$$
\left(
\begin{tabular}{llllll}
$P_1 p_1$		&$\cdots$		&$P_k p_k - \epsilon$			&$\cdots$		&$P_l p_l+ \epsilon$		&$\cdots$		\\
$P_1(1 - p_1)$		&$\cdots$		&$P_k(1 - p_k) + \epsilon$		&$\cdots$		&$P_l(1 - p_l) - \epsilon$		&$\cdots$		
\end{tabular}
\right)
$$
where the parameter $\epsilon$ that must be kept small enough for all probabilities to remain non-negative.
Without loss of generality, we can assume that $p_k<p_l$, so that we make the conditional probabilities
\beqa{conditionalProbEq}
P(Y=1|Z=k)={P_{k1}\over P_k} = p_k - \epsilon/P_k,\\
P(Y=1|Z=l)={P_{l1}\over P_l} = p_l + \epsilon/P_l
\eeqa
more different when increasing $\epsilon$ from zero.
Computing and differentiating the mutual information with respect to $\epsilon$, most terms cancel and we find that 
\beq{IderivativeEq}
{\partial I(Z,Y) \over\partial\epsilon} \bigg\rvert_{\epsilon=0}= 
\log\left[
{1/p_k-1  \over 1/p_l - 1}
\right] > 0
\eeq
which means that adjusting the probabilities with a sufficiently tiny $\epsilon>0$ will increase the mutual information, completing the proof.
\end{proof}
 

\bibliography{distillation}

\end{document}